\DeclareAcronym{LDDMM}{
  short = LDDMM,
  long = large deformation diffeomorphic metric mapping}
\DeclareAcronym{RKHS}{
  short = RKHS,
  long = reproducing kernel {H}ilbert space}
\DeclareAcronym{ML}{
  short = ML,
  long = maximum likelihood}
\DeclareAcronym{ET}{
  short = ET,
  long = electron tomography}
\DeclareAcronym{TV}{
  short = TV,
  long = total variation}
\DeclareAcronym{FBP}{
  short = FBP,
  long = filtered back projection}
\DeclareAcronym{ICES}{
  short = ICES,
  long = Institute for Computational Engineering and Sciences}
\DeclareAcronym{CAS}{
  short = CAS,
  long = Chinese Academy of Sciences}
\DeclareAcronym{KTH}{
  short = KTH,
  long =   KTH -- Royal Institute of Technology}
\DeclareAcronym{LSEC}{
  short = LSEC,
  long = State Key Laboratory of Scientific and Engineering Computing}
\DeclareAcronym{ODL}{
  short = ODL,
  long = Operator Discretization Library}
\DeclareAcronym{PDE}{
  short = PDE,
  long = partial differential equation}     
\DeclareAcronym{ODE}{
  short = ODE,
  long = ordinary differential equation}    
\DeclareAcronym{FFT}{
  short = FFT,
  long = Fast Fourier transform} 
\DeclareAcronym{SSIM}{
  short = SSIM,
  long = structural similarity}
\DeclareAcronym{PSNR}{
  short = PSNR,
  long = peak signal-to-noise ratio}
\DeclareAcronym{SNR}{
  short = SNR,
  long = signal-to-noise ratio}  
\DeclareAcronym{PET}{
  short = PET,
  long = positron emission tomography}
\DeclareAcronym{SPECT}{
  short = SPECT,
  long = single photon emission computed tomography}
\DeclareAcronym{CT}{
  short = CT,
  long = computed tomography}
\DeclareAcronym{MRI}{
  short = MRI,
  long = magnetic resonance imaging}
\DeclareAcronym{FDG}{
  short = {${}_{18}$F-FDG},
  long = {$[_{18}$F$]$-fluorodeoxyglucose}}
\DeclareAcronym{MLEM}{
  short = ML-EM,
  long = maximum-likelihood expectation maximisation}
\DeclareAcronym{MAP}{
  short = MAP,
  long = maximum a posteriori}  
\newtheorem{definition}{Definition}
\newtheorem{proposition}{Proposition}
\newcommand{\Natural}{\mathbb{N}}
\newcommand{\Real}{\mathbb{R}}
\newcommand{\domain}{\Omega}
\newcommand{\datadomain}{M}
\newcommand{\VelocitySpace}[2]{L^{#1}\left( [0,1], #2 \right)}
\newcommand{\RecSpace}{X}
\newcommand{\DataSpace}{Y}
\newcommand{\image}{f}
\newcommand{\imagetemp}{I}
\newcommand{\imageother}{J}
\newcommand{\data}{g}
\newcommand{\dataother}{h}
\newcommand{\datanoise}{e}
\newcommand{\target}{I^*}
\newcommand{\template}{I_0}
\newcommand{\vecfield}{\nu}
\newcommand{\vecfieldother}{\mu}
\newcommand{\vfield}{\boldsymbol{\vecfield}}
\newcommand{\vfieldother}{\boldsymbol{\vecfieldother}}
\newcommand{\vfieldtrue}{\vfield^{*}}
\newcommand{\vfieldopt}{\widehat{\vfield}}
\newcommand{\diffeo}{\phi}
\newcommand{\diffeoflow}[2]{\varphi^{#1}_{#2}}
\newcommand{\intenfield}{\zeta}
\newcommand{\intenfieldtrue}{\intenfield^{*}}
\newcommand{\intenfieldopt}{\widehat{\intenfield}}
\newcommand{\templateflow}[2]{I^{#1}_{#2}}
\newcommand{\metatraj}[2]{f^{#1}_{#2}}
\newcommand{\vzero}{\boldsymbol{0}}
\DeclareMathOperator{\Det}{Det}
\DeclareMathOperator{\Diff}{Diff}
\DeclareMathOperator{\SNR}{PSNR}
\newcommand{\der}{\mathrm{d}}
\DeclareMathOperator*{\argmin}{arg\,min}
\DeclareMathOperator{\ForwardOp}{\mathcal{A}}
\DeclareMathOperator{\ObjFunc}{\mathcal{J}}
\DeclareMathOperator{\DataDiscr}{\mathcal{L}}
\DeclareMathOperator{\DeforOp}{\mathcal{W}}
\DeclareMathOperator{\metric}{d_G}
\DeclareMathOperator{\Id}{Id}
\newcommand{\Cdot}{\,\cdot\,}
\newcommand{\captionsubref}[1]{\protect\subref{#1}}
\DeclareMathOperator*{\Div}{div}
\newcommand{\D}{\,\text{D}}
\crefname{equation}{}{}
\Crefname{equation}{}{}
\crefname{item}{}{}
\Crefname{item}{}{}
\addunit{\decibel}{dB}
\title{Image reconstruction through metamorphosis}
\author{Barbara Gris \thanks{LJLL - Laboratoire Jacques-Louis Lions, UPMC, Paris, France.} \and Chong Chen \thanks{LSEC, ICMSEC, Academy of Mathematics and Systems Science, Chinese Academy of Sciences, Beijing 100190, People's Republic of China}  \and Ozan \"Oktem \thanks{Department of Mathematics, KTH -- Royal Institute of Technology, Stockholm, Sweden.}}
\date{}
\begin{document}
\acuse{FDG}  

\maketitle
\begin{abstract}
This article adapts the framework of metamorphosis to solve inverse problems in imaging that includes joint reconstruction and image registration. 
The deformations in question have two components, one that is a geometric deformation moving intensities and the other a deformation of 
intensity values itself, which, e.g., allows for appearance of a new structure. 
The idea developed here is to reconstruct an image from noisy and indirect observations by registering, via metamorphosis, a template to the observed data. Unlike a registration with only geometrical changes, this framework gives good results when intensities of the template are poorly chosen.  We show that this method is a well-defined regularisation method (proving existence, stability and convergence) and present several numerical examples.
\end{abstract}

\section{Introduction}
In \emph{shape based reconstruction} or \emph{spatiotemporal image reconstruction}, a key difficulty is to match an image against an indirectly observed target (\emph{indirect image registration}). This paper provides theory and algorithms for indirect image registration applicable to general inverse problems. Before proceeding, we give a brief overview of these notions along with a short survey of existing results.

\paragraph{Shape based reconstruction}
The goal is to recover shapes of interior sub-structures of an object whereas variations within these is of less importance. 
Examples of such imaging studies are nano-characterisation of specimens by means of electron microscopy or x-ray phase contrast imaging, e.g., nano-characterisation of materials by electron \ac{ET} primarily focuses on the morphology of sub-structures \cite{BlPeBaBa15}. Another example is quantification of sub-resolution porosity in materials by means of  x-ray phase contrast imaging. 

In these imaging applications it makes sense to account for qualitative prior shape information during the reconstruction. Enforcing an exact spatial match between a template and the reconstruction is often too strong since realistic shape information is almost always approximate, so the natural approach is to perform reconstruction assuming the structures are `shape wise similar' to a template. 
 
\paragraph{Spatiotemporal imaging}
Imaging an object that undergoes temporal variation leads to a spatiotemporal reconstruction problem where both the object and its time variation needs to be recovered from noisy time series of measured data. 
An important case is when the only time dependency is that of the object.

Spatiotemporal imaging occurs in medical imaging, see, e.g., \cite{SoDaPa13} for a survey of organ motion models. 
It is particular relevant for techniques like \ac{PET} and \ac{SPECT}, which are used for visualising the distribution of injected radiopharmaceuticals (activity map). 
The latter is an inherently dynamic quantity, e.g., anatomical structures undergo motion, like the motion of the heart and respiratory motion of the lungs and thoracic wall, during the data acquisition. 
Not accounting for organ motion is known to degrade the spatial localisation of the radiotracer, leading to spatially blurred images. 
Furthermore, even when organ motion can be neglected, there are other dynamic processes, such as the uptake and wash-out of radiotracers from body organs.
Visualising such kinetics of the radiotracers can actually be a goal in itself, as in pre-clinical imaging studies related to drug discovery/development.
The term `dynamic' in \ac{PET} and \ac{SPECT} imaging often refers to such temporal variation due to radiotracers kinetics rather than organ movement \cite{GuReSiMaBu10}.

To exemplify the above mentioned issues, consider \ac{SPECT} based cardiac perfusion studies and \ac{FDG}-\ac{PET} imaging of lung nodules/tumours. 
The former needs to account for the beating heart and the latter for respiratory motion of the lungs and thoracic wall.
Studies show a maximal displacement of \unit{23}{\milli\meter} (average 15--\unit{20}{\milli\meter}) due to respiratory motion \cite{ScLe00} and \unit{42}{\milli\meter} (average 8--\unit{23}{\milli\meter}) due to cardiac motion in thoracic \ac{PET} \cite{WaViBe99}. 


\paragraph{Indirect image registration (matching)}
In image registration the aim is to deform a template image so that it matches a target image, which becomes challenging when the template is allowed to undergo non-rigid deformations.

A well developed framework is diffeomorphic image registration where the image registration is recast as the problem of finding a suitable diffeomorphism that deforms the template into the target image \cite{Yo10,BaBrMi14}.
The underlying assumption is that the target image is contained in the orbit of the template under the group action of diffeomorphisms. 
This can be stated in a very general setting where diffeomorphisms act on various structures, like landmark points, curves, surfaces, scalar images, or even vector/tensor valued images.

The registration problem becomes more challenging when the target is only known indirectly through measured data. 
This is referred to as \emph{indirect image registration}, see \cite{OkChDoRaBa16} for using small diffeomorphic deformations and \cite{Chen:2018aa, hinkle20124d} for adapting the \ac{LDDMM} framework to indirect image registration.

\section{Overview of paper and specific contributions}
The paper adapts the metamorphosis framework \cite{trouve2005metamorphoses} to the indirect image registration setting. Metamorphosis is an extension of the \ac{LDDMM} framework (diffeomorphometry) \cite{Yo10,MiYoTr14} where not only the geometry of the template, but also the grey-scale values undergo diffeomorphic changes. 

We start by recalling necessary theory from \ac{LDDMM}-based indirect registration (\cref{sec:IndirectReg}). 
Using the notions from \cref{sec:IndirectReg}, we adapt the metamorphosis framework to the indirect setting (\cref{sec:MetaInDirReg}).
We show how this framework allows to define a regularization method for inverse problems, satisfying properties of existence, stability and convergence (\cref{sec:RegPropMeta}). The numerical implementation is outlined in \cref{sec:Numericalmpl}.
 We present several numerical examples from 2D tomography, and in particular give a preliminary result for motion reconstruction when the acquisition is done at several time points. We also study the robustness of our methods with respects to the parameters (\cref{sec:TomoExample}).

\section{Indirect diffeomorphic registration}\label{sec:IndirectReg}
\subsection{Large diffeomorphic deformations}\label{Sec:LargeDeformations}
We recall here the notion of large diffeomorphic deformations defined by flows of time-varying vector fields, as formalized in \cite{arguillere2014shape}.

Let $\domain \subset \Real^d$ be a fixed bounded domain and let $\RecSpace := L^2(\domain, \Real)$ represent grey scale images on $\domain$. Next, let $V$ denote a fixed Hilbert space of vector fields on $\Real^d$. 
We will assume $V \subset C^{p}_0 (\domain)$, i.e., the vector fields are supported on $\domain$ and $p$ times continuously differentiable.
Finally, $\VelocitySpace{1}{V}$ denotes the space of time-dependent $V$-vector fields that are integrable, i.e., 
\[ \vfield(t, \Cdot) \in V 
    \quad\text{and}\quad 
   t \mapsto \bigl\Vert \vfield(t, \Cdot) \bigr\Vert_{C^{p}}
   \text{ is integrable on $[0,1]$.}
\]    
Furthermore, we will frequently make use of the following (semi) norm on 
\[ \Vert \vfield \Vert_p := \Bigl( \int_{0}^1 \bigl\Vert \vfield(t,\Cdot) \bigr\Vert_{V}^p \,\der t \Bigr)^{1/p} \]
where $\Vert \Cdot \Vert_V$ is the naturally defined norm based upon the inner product of the Hilbert space $V$ of vector fields.

The following proposition allows one to consider flows of elements in $\VelocitySpace{1}{V}$ and ensures 
that these flows belong to $\Diff^{p}_0(\domain)$ (set of $p$-diffeomorphisms that are supported in $\domain \subset \Real^d$,
and if $\domain$ is unbounded, tend to zero towards infinity).
\begin{proposition}\label{flow1}
Let $\vfield \in \VelocitySpace{1}{V}$ and consider the ordinary differential equation (flow equation):
\begin{equation}\label{Eq:FlowEq}
\begin{cases}
  \dfrac{\der}{\der t} \diffeo(t,x) = \vfield\bigl(t,\diffeo(t,x)\bigr)  & \\[0.75em]
  \diffeo(0,x) = x &
\end{cases}  
\quad\text{for any $x \in \domain$ and $t \in [0,1]$.}
\end{equation}
Then, \cref{Eq:FlowEq} has a unique absolutely continuous solution $\diffeo(t, \Cdot) \in \Diff^{p}_0(\Real^d)$.
\end{proposition}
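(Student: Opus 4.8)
The plan is to establish existence and uniqueness of the flow via a Picard–Lindelöf (contraction-mapping) argument applied to the integral reformulation of \cref{Eq:FlowEq}, and then to upgrade the regularity of the solution map $x \mapsto \diffeo(t,x)$ to conclude it lies in $\Diff^{p}_0(\Real^d)$. First I would rewrite \cref{Eq:FlowEq} in integral form, $\diffeo(t,x) = x + \int_0^t \vfield(s,\diffeo(s,x))\,\der s$, and observe that the embedding $V \hookrightarrow C^p_0(\domain)$ (with $p \ge 1$) guarantees that for each $t$ the field $\vfield(t,\Cdot)$ is bounded and Lipschitz in the space variable, with Lipschitz constant controlled by $\Vert \vfield(t,\Cdot)\Vert_{C^p}$, which is integrable in $t$ by hypothesis. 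This is precisely the setting of the Carathéodory existence theory: the right-hand side is measurable in $t$, Lipschitz in $x$, and dominated by an $L^1$ function of $t$.

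Next I would run the contraction argument on the Banach space of continuous maps $[0,1]\times\domain \to \Real^d$ equipped with a suitably weighted sup-norm (weight $e^{-\lambda \int_0^t \Vert\vfield(s,\Cdot)\Vert_{C^p}\,\der s}$ with $\lambda$ large), so that the Picard operator becomes a strict contraction globally on $[0,1]$ without needing to patch local solutions. Absolute continuity of $t \mapsto \diffeo(t,x)$ then follows from the integral identity, since the integrand is dominated by the $L^1$ function $t \mapsto \Vert\vfield(t,\Cdot)\Vert_{C^\infty}$ evaluated along the trajectory. The standard Grönwall estimate gives that the trajectories stay in a compact set and that $\diffeo(t,\Cdot)$ depends Lipschitz-continuously on $x$; since $\vfield$ is supported in $\domain$, points outside $\domain$ stay fixed, which gives the support and the boundary behaviour required for $\Diff^p_0$.

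The main obstacle — the part deserving genuine care rather than a citation — is showing that $\diffeo(t,\Cdot)$ is not merely a bi-Lipschitz homeomorphism but a genuine $C^p$ diffeomorphism with $C^p$ inverse. For this I would differentiate the flow equation formally in $x$ to obtain the linearized (variational) equation $\partial_t \bigl(D_x\diffeo(t,x)\bigr) = D\vfield\bigl(t,\diffeo(t,x)\bigr)\,D_x\diffeo(t,x)$, a linear ODE whose coefficients are continuous in $x$ (because $\vfield(t,\Cdot)\in C^p$ with $p\ge 1$) and $L^1$ in $t$; solving it and justifying that its solution really is the derivative of $\diffeo$ (a difference-quotient estimate using the Lipschitz bound on $D\vfield$) gives $C^1$ regularity, and iterating this bootstrap up to order $p$ yields $\diffeo(t,\Cdot)\in C^p$. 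Invertibility comes from running the flow backward in time: the map $\diffeo(1,\Cdot)^{-1}$ is the time-$1$ flow of the reversed field $\tilde\vfield(t,x) = -\vfield(1-t,x)$, which is again in $\VelocitySpace{1}{V}$, so the same argument gives it $C^p$ regularity, and the chain rule shows the two flows are mutual inverses. Finally, the determinant of $D_x\diffeo(t,x)$ satisfies a Liouville-type identity and stays strictly positive, confirming orientation preservation. All of this is classical — it is the content of \cite{arguillere2014shape,Yo10} — so in the write-up I would cite those references for the details and only sketch the contraction and the variational-equation bootstrap.
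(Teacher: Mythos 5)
Your proposal is correct: the Carath\'eodory/Picard--Lindel\"of contraction on the integral form, the variational-equation bootstrap to $C^p$, and the time-reversal argument for the inverse are exactly the standard proof of this result. The paper itself gives no proof and simply cites \cite{arguillere2014shape}, where this is the argument used, so your route coincides with the intended one (the only blemish is the stray ``$C^\infty$'' norm in the absolute-continuity step, which should be the sup norm of $\vfield(t,\Cdot)$).
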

The above result is proved in \cite{arguillere2014shape} and the unique solution of \cref{Eq:FlowEq} is henceforth called the \emph{flow of $\vfield$}.
We also introduce to notation $\diffeoflow{\vfield}{s,t} \colon \Real^d \to \Real^d$ that refers to 
\begin{equation}\label{eq.FlowDiffeo}
  \diffeoflow{\vfield}{s,t} := \diffeo(t,\Cdot) \circ \diffeo(s,\Cdot)^{-1}  
    \quad\text{for $s,t \in [0,1]$}
\end{equation}
where $\diffeo \colon \domain \to \Real^d$ denotes the unique solution to \cref{Eq:FlowEq}.

As stated next, the set of diffeomorphisms that are given as flows forms a group that is a complete metric space \cite{arguillere2014shape}.
\begin{proposition}
Let $V \subset C^{p}_0 (\domain)$ ($p \geq 1$) be an admissible \ac{RKHS} and define  
\[ G_V := \Bigl\{ \diffeo \colon \Real^d \to \Real^d \mid \diffeo = \diffeoflow{\vfield}{0,1} \text{ for some $\vfield \in \VelocitySpace{2}{V}$} \Bigr\}. \]
Then $G_V$ forms a sub-group of $\Diff^{p}_0(\Real^d)$ that is a  complete metric space under the metric
\begin{align*}
 \metric(\phi_1, \phi_2) 
   &:= \inf\Bigl\{ \Vert \vfield \Vert_1 : \vfield \in L^1([0, 1], V) \text{ and } \phi_1 = \phi_2 \circ \diffeoflow{\vfield}{0,1} \Bigr\}
\\   
   &= \inf\Bigl\{ \Vert \vfield \Vert_2 : \vfield \in L^1([0, 1], V) \text{ and } \phi_1 = \phi_2 \circ \diffeoflow{\vfield}{0,1} \Bigr\}.
\end{align*}
\end{proposition}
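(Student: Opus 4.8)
The plan is to check the group axioms, then the metric-space axioms, then the equality of the two infima, and finally completeness; throughout, the workhorses are time-reparametrisation and concatenation of elements of $\VelocitySpace{1}{V}$, together with the estimates from \cref{flow1} (taken from \cite{arguillere2014shape}). For the group structure: the constant field $\vzero$ has flow $\diffeoflow{\vzero}{0,1} = \Id$, so $\Id \in G_V$; if $\diffeo = \diffeoflow{\vfield}{0,1}$ then the time-reversed field $\bar{\vfield}(t,\Cdot) := -\vfield(1-t,\Cdot)$ has $\Vert\bar{\vfield}\Vert_p = \Vert\vfield\Vert_p$ and flow $\diffeoflow{\bar{\vfield}}{0,1} = \diffeo^{-1}$; and for $\diffeo_i = \diffeoflow{\vfield_i}{0,1}$ the rescaled concatenation equal to $2\vfield_2(2t,\Cdot)$ on $[0,\tfrac12]$ and $2\vfield_1(2t-1,\Cdot)$ on $[\tfrac12,1]$ again lies in $\VelocitySpace{2}{V}$ and has flow $\diffeo_1 \circ \diffeo_2$ (run $\vfield_2$ first, then $\vfield_1$). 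By \cref{flow1} every element of $G_V$ already lies in $\Diff^p_0(\Real^d)$, so $G_V$ is a subgroup.

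For the metric axioms, $\metric \ge 0$ and $\metric(\diffeo,\diffeo) = 0$ are immediate, and symmetry follows from the reversal step since $\diffeo_1 = \diffeo_2 \circ \diffeoflow{\vfield}{0,1}$ is equivalent to $\diffeo_2 = \diffeo_1 \circ \diffeoflow{\bar{\vfield}}{0,1}$ with equal norms. The triangle inequality follows from concatenation: if $\diffeo_1 = \diffeo_2 \circ \diffeoflow{\vfield}{0,1}$ and $\diffeo_2 = \diffeo_3 \circ \diffeoflow{\vfieldother}{0,1}$, the rescaled field running $\vfield$ on $[0,\tfrac12]$ then $\vfieldother$ on $[\tfrac12,1]$ has flow $\diffeoflow{\vfieldother}{0,1}\circ\diffeoflow{\vfield}{0,1}$ and $L^1$-seminorm $\Vert\vfield\Vert_1 + \Vert\vfieldother\Vert_1$, and taking infima over $\vfield,\vfieldother$ yields $\metric(\diffeo_1,\diffeo_3) \le \metric(\diffeo_1,\diffeo_2) + \metric(\diffeo_2,\diffeo_3)$. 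For definiteness I would invoke admissibility of $V$, i.e.\ the continuous embedding $V \hookrightarrow C^p_0(\domain)$: a Gr\"onwall estimate along \cref{Eq:FlowEq} bounds $\Vert\diffeoflow{\vfield}{0,1} - \Id\Vert_{C^p}$ by a quantity that depends only on $\Vert\vfield\Vert_1$ and vanishes as $\Vert\vfield\Vert_1\to 0$, so $\metric(\diffeo_1,\diffeo_2)=0$ forces $\diffeo_1 = \diffeo_2$.

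The equality of the two infima is a reparametrisation statement. On the one hand, Cauchy--Schwarz on the unit interval gives $\Vert\vfield\Vert_1 \le \Vert\vfield\Vert_2$ for every $\vfield$, hence the $L^1$-infimum is $\le$ the $L^2$-infimum. On the other hand, given any $\vfield\in\VelocitySpace{1}{V}$ with $\diffeoflow{\vfield}{0,1}$ fixed, reparametrising time by the normalised arc-length $t\mapsto \int_0^t \Vert\vfield(\tau,\Cdot)\Vert_V\,\der\tau$ produces a field of constant speed $\Vert\vfield\Vert_1$ with the same endpoint diffeomorphism and the same value of $\Vert\Cdot\Vert_1$; for a constant-speed field the two seminorms coincide, so the $L^2$-infimum is $\le$ the $L^1$-infimum (this also shows the $L^1$- and $L^2$-descriptions of $G_V$ agree). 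The one delicate point is the legitimacy of this reparametrisation (absolute continuity of the arc-length function and the treatment of subintervals where the speed vanishes), which is classical and also underlies the corresponding claim in \cite{arguillere2014shape}.

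Completeness is the main obstacle. Given a $\metric$-Cauchy sequence $(\diffeo_n)$, pass to a subsequence with $\metric(\diffeo_{n_k},\diffeo_{n_{k+1}}) < 2^{-k}$ and pick constant-speed $\vfield_k\in\VelocitySpace{1}{V}$ with $\diffeo_{n_{k+1}} = \diffeo_{n_k}\circ\diffeoflow{\vfield_k}{0,1}$ and $\Vert\vfield_k\Vert_1 = \Vert\vfield_k\Vert_2 < 2^{-k}$. Glue the $\vfield_k$ together on a partition of $[0,1]$ into blocks shrinking towards $0$ (with the reparametrisations making each block carry the flow of $\vfield_k$): the resulting field $\vfield_\infty$ has $\Vert\vfield_\infty\Vert_1 \le \sum_k\Vert\vfield_k\Vert_1 < \infty$ and, thanks to the constant-speed choice, also $\Vert\vfield_\infty\Vert_2 < \infty$, so $\vfield_\infty\in\VelocitySpace{2}{V}$. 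By \cref{flow1} its flow exists and is absolutely continuous, with the partial flows realising the products $\diffeo_{n_1}^{-1}\circ\diffeo_{n_k}$ and the tail flows near $t=0$ tending to $\Id$ by the Gr\"onwall bound; this identifies $\diffeo := \diffeo_{n_1}\circ\diffeoflow{\vfield_\infty}{0,1}\in G_V$ as the limit, with $\metric(\diffeo_{n_k},\diffeo)\to 0$, and a standard Cauchy-plus-convergent-subsequence argument upgrades this to $\metric(\diffeo_n,\diffeo)\to 0$. The genuinely technical steps here --- rigorously realising the infinite gluing as an element of $\VelocitySpace{2}{V}$ and controlling the behaviour of its flow --- rely on the uniform flow estimates established in \cite{arguillere2014shape}.
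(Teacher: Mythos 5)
The paper does not prove this proposition at all: it is quoted directly from \cite{arguillere2014shape}, so there is no in-paper argument to compare against. Your proof is the standard one underlying that reference (time reversal for inverses and symmetry, rescaled concatenation for products and the triangle inequality, Gr\"onwall for non-degeneracy, constant-speed reparametrisation for the equality of the $L^1$- and $L^2$-infima, and the shrinking-block gluing for completeness), and it is sound; the only detail worth pinning down is that in the completeness step the block lengths $\ell_k$ must be chosen compatibly with the bounds $\Vert\vfield_k\Vert_1<2^{-k}$ (e.g.\ $\ell_k=2^{-k}$) so that $\sum_k \Vert\vfield_k\Vert_1^2/\ell_k<\infty$, which is what actually makes the glued field land in $\VelocitySpace{2}{V}$ rather than merely in $\VelocitySpace{1}{V}$.
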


The elements of $G_V$ are called \emph{large diffeomorphic deformations} and $G_V$ acts on $\RecSpace$ via the \emph{geometric group action} that is defined by the operator 
\begin{equation}\label{eq:GeometricGroupAction}
 \DeforOp \colon G_V \times \RecSpace \to \RecSpace 
   \quad\text{where}\quad
   \DeforOp(\diffeo , \template):=\template \circ \diffeo^{-1}. 
\end{equation}

We conclude by stating regularity properties of flows of velocity fields as well as the group action in \cref{eq:GeometricGroupAction}, these will play an important role in what is to follow. The proof is given in \cite{bruveris2015geometry}.
\begin{proposition}\label{prop:Regularity}
  Assume $V \subset C^{p}_0 (\domain)$ ($p \geq 1$) is a fixed admissible Hilbert space of vector fields 
  on $\domain$ and $\{ \vfield^n \}_n \subset \VelocitySpace{2}{V}$ a sequence that converges weakly to 
  $\vfield \in \VelocitySpace{2}{V}$. Then, the following holds with $\diffeoflow{n}{t}:= \diffeoflow{\vfield^n}{0,t}$:
  \begin{enumerate}
  \item $(\diffeoflow{n}{t})^{-1}$ converges to $(\diffeoflow{\vfield}{0,t})^{-1}$ uniformly w.r.t. $t \in [0,1]$ and uniformly 
    on compact subsets of $\domain \subset \Real^d$.
  \item $\displaystyle{\lim_{n \to \infty}} \Bigl\Vert \DeforOp(\diffeoflow{n}{t}, \template) 
    - \DeforOp(\diffeoflow{\vfield}{0,t}, \template) \Bigr\Vert_{\RecSpace} = 0$
    for any $\image \in \RecSpace$.
  \end{enumerate}
\end{proposition}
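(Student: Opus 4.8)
The plan is to reduce everything to two ingredients: uniform bounds on the sequence and on the induced flows, and a Gr\"onwall-type comparison for the flow equation \cref{Eq:FlowEq} in which weak convergence enters only through its pairing against a \emph{norm-compact} family of functionals. First, a weakly convergent sequence is norm-bounded (Banach--Steinhaus), so $\sup_n \Vert \vfield^n \Vert_2 =: C_0 < \infty$; since $V$ embeds continuously into $C^p_0(\domain)$ (admissibility) and $[0,1]$ has finite measure, this also bounds $A := \sup_n \int_0^1 \Vert \vfield^n(s,\Cdot) \Vert_{C^1}\,\der s < \infty$. Classical estimates for \cref{Eq:FlowEq} then give, uniformly in $n$ and $t$, that $x \mapsto \diffeoflow{n}{0,t}(x)$ and $x \mapsto (\diffeoflow{n}{t})^{-1}(x) = \diffeoflow{\vfield^n}{t,0}(x)$ are Lipschitz (indeed $\Vert \mathrm{D}\diffeoflow{\vfield^n}{s,t}\Vert_\infty \le e^{A}$) and that $t \mapsto \diffeoflow{n}{0,t}(x)$ is equicontinuous in $t$.

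For item (1), fix a compact $K \subset \domain$ and set $\epsilon_n(t) := \sup_{x \in K}\bigl\vert \diffeoflow{n}{0,t}(x) - \diffeoflow{\vfield}{0,t}(x)\bigr\vert$. Subtracting the integrated form of \cref{Eq:FlowEq} for $\vfield^n$ and for $\vfield$ and splitting off the base-point difference,
\[
\diffeoflow{n}{0,t}(x) - \diffeoflow{\vfield}{0,t}(x) = \int_0^t \bigl[\vfield^n\bigl(s,\diffeoflow{n}{0,s}(x)\bigr) - \vfield^n\bigl(s,\diffeoflow{\vfield}{0,s}(x)\bigr)\bigr]\,\der s \;+\; \int_0^t \bigl[\vfield^n - \vfield\bigr]\bigl(s,\diffeoflow{\vfield}{0,s}(x)\bigr)\,\der s .
\]
The first integral is bounded by $\int_0^t \Vert\vfield^n(s,\Cdot)\Vert_{C^1}\,\epsilon_n(s)\,\der s$ (mean value inequality, using $p \ge 1$). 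The second equals $\bigl\langle \Phi_{t,x},\, \vfield^n - \vfield \bigr\rangle$, where $\Phi_{t,x}(s) := \mathbf{1}_{[0,t]}(s)\,\mathrm{ev}_{\diffeoflow{\vfield}{0,s}(x)}$ is an element of $\bigl(\VelocitySpace{2}{V}\bigr)^{*}$, point evaluation being a bounded functional on the admissible space $V$. Because $y \mapsto \mathrm{ev}_y$ is Lipschitz into $V^{*}$ (again $p \ge 1$) and $(s,x) \mapsto \diffeoflow{\vfield}{0,s}(x)$ is jointly continuous, $(t,x) \mapsto \Phi_{t,x}$ is norm-continuous on the compact set $[0,1] \times K$, so its range is norm-compact; as a bounded, weakly null sequence converges to $0$ uniformly over a norm-compact set of functionals, $\delta_n := \sup_{t \in [0,1],\, x \in K}\bigl\vert\langle\Phi_{t,x},\vfield^n-\vfield\rangle\bigr\vert \to 0$. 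Gr\"onwall's inequality then gives $\epsilon_n(t) \le \delta_n\, e^{A} \to 0$, uniformly in $t$. For the inverses one repeats the argument for the backward flow (time reversal $s \mapsto t-s$ is an $L^2$-isometry, hence weakly continuous, and the $t$-dependence is again absorbed into a compact family of functionals), or, equivalently, one invokes the elementary fact that if a sequence of homeomorphisms converges uniformly on compacta and their inverses are equi-Lipschitz on compacta, then the inverses converge uniformly on compacta as well. (Should one prefer to produce a limit by Arzel\`a--Ascoli rather than estimate directly, \cref{flow1} provides the uniqueness needed to identify that limit as the flow of $\vfield$.)

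For item (2), item (1) gives $(\diffeoflow{n}{t})^{-1} \to (\diffeoflow{\vfield}{0,t})^{-1}$ pointwise on $\domain$. When $\image$ is continuous with compact support in $\domain$, $\image\circ(\diffeoflow{n}{t})^{-1} \to \image\circ(\diffeoflow{\vfield}{0,t})^{-1}$ pointwise with $\bigl\vert \image\circ(\diffeoflow{n}{t})^{-1}\bigr\vert \le \Vert\image\Vert_\infty$, so dominated convergence (recall $\domain$ has finite measure) yields $\DeforOp(\diffeoflow{n}{t},\image) \to \DeforOp(\diffeoflow{\vfield}{0,t},\image)$ in $\RecSpace$. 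For general $\image \in \RecSpace$, density of such functions in $L^2(\domain)$ together with a $3\varepsilon$ argument concludes, once we observe that the composition operators $\image \mapsto \image\circ(\diffeoflow{n}{t})^{-1}$ are bounded on $\RecSpace$ uniformly in $n$ and $t$: by change of variables $\Vert\image\circ(\diffeoflow{n}{t})^{-1}\Vert_{\RecSpace}^2 = \int_\domain \vert\image\vert^2\,\bigl\vert\det \mathrm{D}\diffeoflow{n}{0,t}\bigr\vert$ and $\bigl\vert\det \mathrm{D}\diffeoflow{n}{0,t}\bigr\vert \le e^{dA}$.

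The step I expect to be the main obstacle is the second integral in item (1): the evaluation points $\diffeoflow{n}{0,s}(x)$ converge only strongly, never weakly, so one must simultaneously decouple the weak convergence of $\vfield^n$ from the motion of the base point \emph{and} promote pointwise-in-$(t,x)$ convergence to convergence uniform in $t$ and locally uniform in $x$. Observing that the test functionals $\Phi_{t,x}$ sweep out a norm-compact subset of $\bigl(\VelocitySpace{2}{V}\bigr)^{*}$ is precisely what makes both of these succeed at once; the rest (Gr\"onwall, change of variables, dominated convergence) is routine.
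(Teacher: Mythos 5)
Your proof is correct; the paper itself does not prove \cref{prop:Regularity} but defers to \cite{bruveris2015geometry}, and your argument --- norm bounds from weak convergence via Banach--Steinhaus, a Gr\"onwall estimate for the flow difference, and handling the weak-convergence term by pairing $\vfield^n-\vfield$ against the norm-compact family of functionals $\Phi_{t,x}$ built from point evaluations along the limit flow --- is precisely the standard argument given there, with the passage to the inverses and the density/change-of-variables step for item (2) also handled correctly. The only cosmetic remark is that $\langle \Phi_{t,x}, \vfield^n-\vfield\rangle$ is $\Real^d$-valued, so the ``uniform convergence of a weakly null sequence over a norm-compact set of functionals'' step should be read componentwise (or with $\Phi_{t,x}$ viewed as a bounded operator into $\Real^d$), which changes nothing.
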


\subsection{Indirect image registration}\label{sec:ImageRegistration}
Image registration (matching) refers to the task of deforming a given template image $\template \in \RecSpace$ so that it matches a given 
target image $\target \in \RecSpace$. 

The above task can also be stated in an \emph{indirect} setting, which refers to the case when the template $\template \in \RecSpace$ 
is to be registered against a target $\target \in \RecSpace$ that is only indirectly known through data $\data \in \DataSpace$ where 
\begin{equation}\label{eq:InvProb} 
  \data = \ForwardOp(\target) + \datanoise. 
\end{equation}  
In the above, $\ForwardOp \colon \RecSpace \to \DataSpace$ (forward operator) is known and assumed to be differentiable and 
$\datanoise \in \DataSpace$ is a single sample of a $\DataSpace$-valued random element that denotes the measurement noise 
in the data.

A further development requires specifying what is meant by deforming a template image, and we will henceforth consider 
diffeomorphic (non-rigid) deformations, i.e., diffeomorphisms that deform images by actin g on them through a group action.

\paragraph{\Ac{LDDMM}-based registration}
An example of using large diffeomorphic (non-rigid) deformations for image registration is to minimize the following functional:
\begin{equation*}
 G_V \ni \diffeo \mapsto \frac{\gamma}{2} \metric(\Id, \diffeo )^2 
   + \bigl\Vert \DeforOp(\diffeo, \template) - \target \bigr\Vert^2_{\RecSpace} 
 \quad\text{given $\gamma > 0$.}
\end{equation*}
If $V$ is admissible, then minimizing the above functional on $G_V$ amounts to minimizing the following functional on $\VelocitySpace{2}{V}$ 
\cite[Theorem~11.2 and Lemma~11.3]{Yo10}:
\begin{equation*}
  \VelocitySpace{2}{V} \ni \vfield \mapsto \frac{\gamma}{2} \Vert \vfield \Vert_2^2 
    + \bigl\Vert \DeforOp(\diffeoflow{\vfield}{0,1}, \template) - \image \bigr\Vert^2_{\RecSpace} 
 \quad\text{given $\gamma > 0$.}
\end{equation*}
Such a reformulation is advantageous since $\VelocitySpace{2}{V}$ is a vector space, whereas $G_V$ is not, so it is easier to minimize a functional 
over $\VelocitySpace{2}{V}$ rather than over $G_V$.

The above can be extended to the indirect setting as shown in \cite{Chen:2018aa}, which we henceforth refer to as \emph{\ac{LDDMM}-based indirect registration}. More precisely, the corresponding indirect registration problem can be adressed by minimising the functional
\[  \VelocitySpace{2}{V} \ni \vfield \mapsto \frac{\gamma}{2} \Vert \vfield \Vert_2^2 
      + \DataDiscr\bigl( (\ForwardOp \circ \DeforOp)(\diffeoflow{\vfield}{0,t}, \template), \data \bigr).
\]
Here, $\DataDiscr \colon \DataSpace \times \DataSpace \to \Real$ is typically given by an appropriate affine transform 
of the data negative log-likelihood \cite{BeLaZa08}, so minimizing $\image \mapsto \DataDiscr\bigl( \ForwardOp(\image), \data \bigr)$ 
corresponds to seeking a maximum likelihood solution of \cref{eq:InvProb}.

An interpretation of the above is that the template image $\template$, which is assumed to be given a priori, acts as a \emph{shape prior} 
when solving the inverse problem in \cref{eq:InvProb} and $\gamma>0$ is a regularization parameter that governs the influence of this 
shape priori against the need to fit measured data.  
This interpretation becomes more clear when one re-formulates \ac{LDDMM}-based indirect registration as
\begin{equation}\label{eq:LDDMM:ODE}
\begin{cases}
\displaystyle{\min_{\vfield \in \VelocitySpace{2}{V}}} 
\biggl[ \frac{\gamma}{2} \Vert \vfield \Vert_{2}^2 
+ \DataDiscr\Bigl( (\ForwardOp \circ \DeforOp)\bigl(\diffeo(1,\Cdot), \template \bigr), \data \Bigr)
\biggr]
& \\[1em]
\displaystyle{\frac{d}{dt}} \diffeo(t,x) = \vfield\bigl( t, \diffeo(t,x) \bigr)   
\quad (t,x)\in \domain \times [0,1], & \\[0.5em] 
\diffeo(0,x)= x \quad x \in \domain. &
\end{cases}  
\end{equation}

\section{Metamorphosis-based indirect registration}\label{sec:MetaInDirReg}

\subsection{Motivation}
As shown in \cite{Chen:2018aa}, access to a template that can act as a shape prior can have profound effect in solving challenging 
inverse problem in imaging. As an example, tomographic imaging problems that are otherwise intractable (highly noisy and sparsely sampled data) 
can be successfully addressed using indirect registration even when using a template is far from the target image used for generating 
the data. 

When template has correct topology and intensity levels, then \ac{LDDMM}-based indirect registration with geometric group action 
is remarkably stable as shown in \cite{Chen:2018aa}. Using a geometric group action, however, makes it impossible to create or remove intensity,
e.g., it is not possible to start out from a template with a single isolated structure and deform it to a image with two isolated structures. 
This severely limits the usefulness of \ac{LDDMM}-based indirect registration, e.g., spatiotemporal images (moves) are likely to involve changes 
in both geometry (objects appear or disappear) and intensity. See \cref{Fig:LDDMM:BadIntensities} for an example of how wrong intensity influences the registration.
\begin{figure}[t]
  \centering
  \subfloat[Template.\label{Fig:LDDMM:BadIntensities:Template}]{\includegraphics[width=0.245\textwidth]{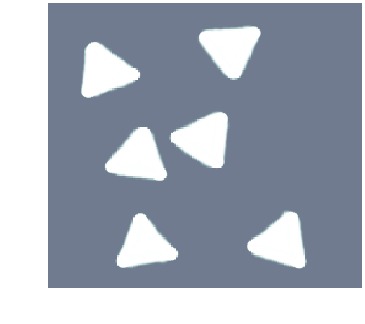}}%
  \subfloat[Target.\label{Fig:LDDMM:BadIntensities:Target}]{\includegraphics[width=0.245\textwidth]{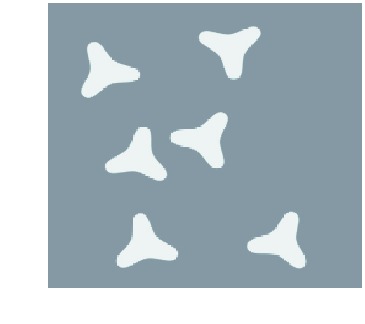}}%
  \subfloat[Reconstruction.\label{Fig:LDDMM:BadIntensities:Reco}]{\includegraphics[width=0.245\textwidth]{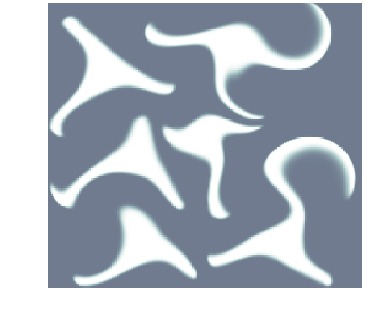}}%
  \hfill
  \subfloat[Data.\label{Fig:LDDMM:BadIntensities:Data}]{\includegraphics[width=0.25\textwidth]{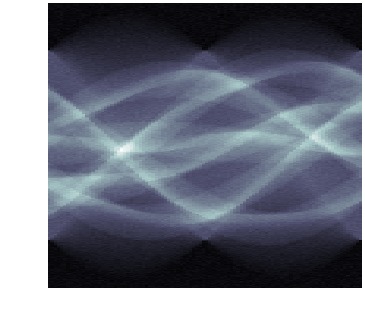}}%
 \caption{Reconstruction by \ac{LDDMM}-based indirect registration \captionsubref{Fig:LDDMM:BadIntensities:Reco} using a template \captionsubref{Fig:LDDMM:BadIntensities:Template} with a geometry 
   that matches the target \captionsubref{Fig:LDDMM:BadIntensities:Target}, but with incorrect background intensity values. 
   Target is observed indirectly through tomographic data \captionsubref{Fig:LDDMM:BadIntensities:Data}, which is 
   2D parallel beam Radon transform with 100~evenly distributed directions (see \cref{sec:TomoInvProb} for details).
   The artefacts in the reconstruction are due to incorrect background intensity in template.}%
 \label{Fig:LDDMM:BadIntensities}
\end{figure}

As noted in \cite{Chen:2018aa}, one approach is to replace the geometric group action with one that alters intensities, e.g., a mass preserving group action. 
Another is to keep the geometric group action, but replace \ac{LDDMM} with a framework for diffeomorphic deformations that acts on both 
geometry and intensities, e.g., metamorphosis. This latter approach is the essence of metamorphosis-based indirect registration.

\subsection{The metamorphosis framework}
In metamorphosis diffeomorphisms are still generated by flows as in \ac{LDDMM}, but the difference is that they now act with a geometric group 
action on \emph{both} intensities and underlying points. As such, metamorphosis extends \ac{LDDMM}. 
The abstract definition of a metamorphosis reads as follows. 
\begin{definition}[Metamorphosis \cite{trouve2005metamorphoses}]\label{def:Meta}
  Let $V \subset C^{p}_0 (\domain)$ be an admissible Hilbert space and ``$.$'' denotes some group action of $G_V$ on $\RecSpace$.  
  A \emph{Metamorphosis} is a curve $t \mapsto (\diffeo_t, \imageother_t)$ in $G_V \times \RecSpace$. 
  The curve $t \mapsto \image_t := \diffeo_t. \imageother_t$ is called the \emph{image} part, 
  $t \mapsto \diffeo_t$ is the \emph{deformation} part, and 
  $t \mapsto \image_t$ is the \emph{template} part.
\end{definition}
The image part represents the temporal evolution that is not related to intensity changes, i.e., evolution of underlying geometry, 
whereas the template part is the evolution of the intensity. Both evolutions, which are combined in metamorphosis, 
are driven by the same underlying flow of diffeomorphisms in $G_V$.

A important case is when the metamorphosis $t \mapsto (\diffeo_t, \image_t)$ has a deformation part that solves the flow 
equation \cref{Eq:FlowEq} and a template part is $C^1$ in time. More precisely, $\VelocitySpace{2}{\RecSpace}$ denotes the space of 
functions in $\RecSpace$ that are square integrable, i.e.,
\[ \intenfield(t, \Cdot) \in \RecSpace 
    \quad\text{and}\quad 
   t \mapsto \bigl\Vert \intenfield(t, \Cdot) \bigr\Vert_{\RecSpace}
   \text{ is in $L^2([0,1],\Real)$.}
\] 
The norm on $\VelocitySpace{2}{\RecSpace}$ is then 
\[ \Vert \intenfield \Vert_2 := \Bigl( \int_{0}^1 \bigl\Vert \intenfield(t,\Cdot) \bigr\Vert_{\RecSpace}^2 \,\der t \Bigr)^{1/2}. \]
We will also use the notation 
\[  \VelocitySpace{2}{V \times\RecSpace} := \VelocitySpace{2}{V} \times \VelocitySpace{2}{\RecSpace}. \]
Bearing in mind the above notation, for given $(\vfield,\intenfield) \in \VelocitySpace{2}{V \times \RecSpace}$ and $\template \in \RecSpace$, 
define the curve  $t \mapsto \templateflow{\vfield,\intenfield}{t}$, which is absolutely continuous on $[0,1]$, as the solution to 
\begin{equation}\label{Eq:TemplateEvolution}
\begin{cases}
  \dfrac{\der}{\der t} \templateflow{\vfield,\intenfield}{t}(x) = \intenfield\bigl(t, \diffeoflow{\vfield}{0,t}(x)\bigr) & \\[0.75em]
  \templateflow{\vfield,\intenfield}{0}(x) = \template(x)
\end{cases}
\quad\text{with $\diffeoflow{\vfield}{0,t} \in G_V$ as in \cref{eq.FlowDiffeo}.}
\end{equation}
The metamorphosis can now be parametrised as $t \mapsto (\diffeoflow{\vfield}{0,t}, \templateflow{\vfield,\intenfield}{t})$.

\paragraph{Indirect registration}
The indirect registration problem in \cref{sec:ImageRegistration} can be approached by metamorphosis instead of \ac{LDDMM}.  
Similar to \ac{LDDMM}-based indirect image registration in \cite{Chen:2018aa},  
we define \emph{metamorphosis-based indirect image registration} as the minimization of the objective functional
\[  
  \ObjFunc_{\gamma, \tau}(\Cdot; \data) \colon \VelocitySpace{2}{V \times \RecSpace} \to \Real
\]
defined as 
\begin{equation}\label{Eq:ObjectiveFunctional}
 \ObjFunc_{\gamma, \tau}(\vfield, \intenfield; \data) 
   := \frac{\gamma}{2} \Vert \vfield \Vert_2^2 + \frac{\tau}{2} \Vert \intenfield \Vert_2^2 
        + \DataDiscr\Bigl(\ForwardOp\bigl(\DeforOp(\diffeoflow{\vfield}{0,1},\templateflow{\vfield,\intenfield}{1})\bigr),\data \Bigr)
\end{equation}
for given regularization parameters $\gamma, \tau \geq 0$, measured data $\data \in \DataSpace$, and initial template $\template \in \RecSpace$ that sets the initial condition $\templateflow{\vfield,\intenfield}{0}(x) := \template(x)$.

Hence, performing metamorphosis-based indirect image registration of a template $\template$ against a target indirectly observed 
through data $\data$ amounts to 
solving 
\begin{equation}\label{eq.MetaOptim}
  (\vfieldopt, \intenfieldopt) \in \argmin_{(\vfield, \intenfield)} \ObjFunc_{\gamma, \tau}(\vfield, \intenfield; \data).
\end{equation}
The above always has a solution assuming the data discrepancy and the forward operator fulfills some weak requirements (see \cref{prop:Existence}).
From a solution we then obtain the following:
\begin{itemize}
\item \textsl{Initial template:} $\template \in \RecSpace$ such that $\templateflow{\vfield,\intenfield}{0}:= \template$.
\item \textsl{Reconstruction:} 
  Final registered template
  $\image^{\vfieldopt,\intenfieldopt}_1 = \DeforOp\bigl(\diffeoflow{\vfieldopt}{0,1}, \templateflow{\vfieldopt,\intenfieldopt}{1}\bigr) \in \RecSpace$.
\item \textsl{Image trajectory:}
  The evolution of both geometry and intensity of the template, given by 
   $t \mapsto \DeforOp\bigl(\diffeoflow{\vfieldopt}{0,t}, \templateflow{\vfieldopt,\intenfieldopt}{t}\bigr)$.
\item \textsl{Template trajectory:}
  The evolution of intensities of the template, i.e., the part that does not include evolution of geometry: 
  $t \mapsto \templateflow{\vfieldopt, \intenfieldopt}{t}$.
\item \textsl{Deformation trajectory:}
  The geometric evolution of the template, i.e., the part that does not include evolution of intensity:
  $t \mapsto \DeforOp(\diffeoflow{\vfieldopt}{0,t}, \template)$.
\end{itemize}

\subsection{Regularising properties}\label{sec:RegPropMeta}
In the following we prove several properties (existence, stability and convergence) of metamorphosis-based indirect image registration, 
which are necessary if the approach is to constitute a \emph{well defined regularisation method} (notion defined in \cite{grasmair2010generalized}). 
We set $\RecSpace := L^2(\domain, \Real)$ and $Y$ a Hilbert space.

\begin{proposition}[Existence]\label{prop:Existence}
  Assume $\ForwardOp \colon \RecSpace \to \DataSpace$
  is continuous and the data discrepancy $\DataDiscr(\Cdot, \data) \colon \DataSpace \to \Real$ is weakly lower semi-continuous 
  for any $\data \in \DataSpace$. Then, $\ObjFunc_{\gamma, \tau}(\Cdot, \data) \colon \VelocitySpace{2}{V \times \RecSpace} \to \Real$ 
  defined through  \cref{Eq:ObjectiveFunctional,Eq:TemplateEvolution} has a minimizer in $\VelocitySpace{2}{V \times \RecSpace}$ 
  for any $\template \in L^2(\domain,\Real)$. 
\end{proposition}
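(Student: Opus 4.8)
The plan is to run the direct method of the calculus of variations. First I take a minimising sequence $(\vfield^n,\intenfield^n)_n\subset\VelocitySpace{2}{V\times\RecSpace}$, so that $\ObjFunc_{\gamma,\tau}(\vfield^n,\intenfield^n;\data)\to m:=\inf\ObjFunc_{\gamma,\tau}(\Cdot;\data)$, which is finite since $\ObjFunc_{\gamma,\tau}(\vzero,\vzero;\data)=\DataDiscr(\ForwardOp(\template),\data)<\infty$. Using that $\gamma,\tau>0$ and that $\DataDiscr(\Cdot,\data)$ is bounded from below (true in the applications of interest; a vanishing regularisation parameter would need a separate argument), the two quadratic penalties in \cref{Eq:ObjectiveFunctional} keep $\Vert\vfield^n\Vert_2$ and $\Vert\intenfield^n\Vert_2$ bounded. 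Since $\VelocitySpace{2}{V}$ and $\VelocitySpace{2}{\RecSpace}=L^2([0,1],L^2(\domain))$ are Hilbert spaces, I pass to a subsequence (not relabelled) with $\vfield^n\rightharpoonup\vfield$ in $\VelocitySpace{2}{V}$ and $\intenfield^n\rightharpoonup\intenfield$ in $\VelocitySpace{2}{\RecSpace}$, and claim $(\vfield,\intenfield)$ is a minimiser. For the penalty terms this is immediate: $\vfield\mapsto\Vert\vfield\Vert_2^2$ and $\intenfield\mapsto\Vert\intenfield\Vert_2^2$ are convex and strongly continuous, hence weakly sequentially lower semi-continuous, so $\Vert\vfield\Vert_2^2\le\liminf_n\Vert\vfield^n\Vert_2^2$ and $\Vert\intenfield\Vert_2^2\le\liminf_n\Vert\intenfield^n\Vert_2^2$.

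The heart of the argument is the data-fitting term: I want $\DataDiscr(\ForwardOp(u),\data)\le\liminf_n\DataDiscr(\ForwardOp(u^n),\data)$, where $u^n:=\DeforOp(\diffeoflow{\vfield^n}{0,1},\templateflow{\vfield^n,\intenfield^n}{1})$ and $u$ is the corresponding expression for $(\vfield,\intenfield)$. This I would obtain by showing $u^n\rightharpoonup u$ in $\RecSpace$ and combining with weak lower semi-continuity of $\DataDiscr(\Cdot,\data)$ and continuity of $\ForwardOp$. The weak convergence of the registered templates is proved in three steps. (i) Applying \cref{prop:Regularity} to $\{\vfield^n\}_n$, the flows $(\diffeoflow{\vfield^n}{0,t})^{\pm1}\to(\diffeoflow{\vfield}{0,t})^{\pm1}$ uniformly in $t\in[0,1]$ and on compact subsets of $\domain$, and the geometric group actions $\DeforOp(\diffeoflow{\vfield^n}{0,t},\Cdot)$ converge strongly on $\RecSpace$; together with the flow estimates of \cite{arguillere2014shape,bruveris2015geometry} this also bounds, uniformly in $n$ and $s$ and two-sidedly on $\overline{\domain}$, the Jacobian determinants of the flows and their inverses, so the composition operators $C^n_s\colon\zeta\mapsto\zeta\circ\diffeoflow{\vfield^n}{0,s}$ are uniformly bounded on $L^2(\domain)$ and converge (with their $L^2$-adjoints $A^n_s$) to the operators attached to $\vfield$. (ii) Writing the solution of \cref{Eq:TemplateEvolution} in integral form as the $\RecSpace$-valued Bochner integral $\templateflow{\vfield^n,\intenfield^n}{1}=\template+\int_0^1 C^n_s\,\intenfield^n(s,\Cdot)\,\der s$, I test against a fixed $\eta\in\RecSpace$ and rewrite $\langle C^n_s\,\intenfield^n(s,\Cdot),\eta\rangle_{\RecSpace}=\langle\intenfield^n(s,\Cdot),A^n_s\eta\rangle_{\RecSpace}$, then split the integral over $s$ into a term in which the weak convergence $\intenfield^n\rightharpoonup\intenfield$ is tested against the fixed limiting integrand $s\mapsto A_s\eta$ (which lies in $\VelocitySpace{2}{\RecSpace}$) and a remainder bounded by $\Vert\intenfield^n\Vert_2\bigl(\int_0^1\Vert A^n_s\eta-A_s\eta\Vert_{\RecSpace}^2\,\der s\bigr)^{1/2}\to0$; this gives $\templateflow{\vfield^n,\intenfield^n}{1}\rightharpoonup\templateflow{\vfield,\intenfield}{1}$ in $\RecSpace$, and the sequence is moreover bounded in $\RecSpace$. (iii) Composing once more with the geometric action and splitting into two terms in the same way, now using the strong convergence from (i) against the bounded sequence $\templateflow{\vfield^n,\intenfield^n}{1}$, yields $u^n\rightharpoonup u$ in $\RecSpace$.

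Adding the three lower-semicontinuity estimates gives $\ObjFunc_{\gamma,\tau}(\vfield,\intenfield;\data)\le\liminf_n\ObjFunc_{\gamma,\tau}(\vfield^n,\intenfield^n;\data)=m$, so $(\vfield,\intenfield)$ is a minimiser. The main obstacle is step (ii): because $\intenfield^n$ converges only weakly and is, on top of that, composed with the $n$-dependent diffeomorphisms $\diffeoflow{\vfield^n}{0,s}$, oscillations in $\intenfield^n$ survive and one cannot hope for strong convergence of $\templateflow{\vfield^n,\intenfield^n}{1}$ — the argument must therefore be carried out entirely with weak convergence in $\RecSpace$. Two consequences need care: first, turning $u^n\rightharpoonup u$ into convergence of $\ForwardOp(u^n)$ requires $\ForwardOp$ to be weakly sequentially continuous, which is automatic for the bounded linear tomographic forward operators used later but is a point worth making explicit for a general nonlinear $\ForwardOp$; second, the uniform (in $n$ and $s$) two-sided control of the flow Jacobians needed for the composition operators in step (i) goes slightly beyond the $C^0$ statement of \cref{prop:Regularity} and is extracted from the cited flow estimates.
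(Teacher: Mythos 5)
Your proposal is correct and follows essentially the same route as the paper's proof: a minimising sequence, weak compactness of $(\vfield^n,\intenfield^n)$, weak lower semicontinuity of the quadratic penalties, and the key weak convergence of the registered template obtained by moving the $n$-dependent diffeomorphisms onto the test function via a change of variables --- your adjoint operators $A^n_s$ are exactly the paper's $\imageother^n(s,x)=\imageother\bigl(\diffeoflow{n}{s,0}(x)\bigr)\,\bigl\vert D\diffeoflow{n}{s,0}(x)\bigr\vert$ --- followed by the same splitting into a strongly convergent test-function part and a weakly convergent $\intenfield^n$ part. The two caveats you flag (weak sequential continuity of $\ForwardOp$ is needed for the composed data term to be weakly lower semicontinuous, and the uniform convergence of the flow Jacobians goes beyond the literal statement of \cref{prop:Regularity}) are genuine points that the paper's own proof glosses over.
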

\begin{proof}
We follow here the strategy to prove existence of minimal trajectories for metamorphosis (as in \cite{charon2016metamorphoses} for instance).
One considers a minimizing sequence of $\ObjFunc_{\gamma, \tau}(\Cdot; \data)$, i.e., a sequence that converges to the infimum of 
$\ObjFunc_{\gamma, \tau}(\Cdot; \data)$ (such a sequence always exists). The idea is to prove that such a minimizing sequence has a 
sub-sequence that converges to a point in $\VelocitySpace{2}{V \times \RecSpace}$, i.e., the infimum is contained in 
$\VelocitySpace{2}{V \times \RecSpace}$ which proves existence of a minima.

Bearing in mind the above, we start by considering a minimizing sequence $\bigl\{ (\vfield^n, \intenfield^n) \bigr\}_n \subset \VelocitySpace{2}{V \times \RecSpace}$ to $\ObjFunc_{\gamma, \tau}(\Cdot; \data)$, i.e.,  
\[  \lim_{n \to \infty} \ObjFunc_{\gamma, \tau}(\vfield^n, \intenfield^n; \data) 
      = \inf_{\vfield, \intenfield} \ObjFunc_{\gamma, \tau}(\vfield, \intenfield; \data). 
\]     
Since $\bigl\{ \vfield^n \bigr\}_n \subset \VelocitySpace{2}{V}$ is bounded, it has a sub-sequence that converges to an element 
$\vfield^{\infty} \in \VelocitySpace{2}{V}$. Likewise, $\bigl\{ \intenfield^n \bigr\}_n \subset \VelocitySpace{2}{\RecSpace}$ has a sub-sequence 
that converges to an element $\intenfield^{\infty} \in \VelocitySpace{2}{\RecSpace}$. Hence, with a slight abuse of notation, we 
conclude that  
\[  \vfield^n \rightharpoonup \vfield^{\infty} \quad\text{and}\quad 
     \intenfield^n \rightharpoonup \intenfield^{\infty}
     \quad\text{as $n \to \infty$.}
\]
The aim is now to prove existence of minimizers by showing that $(\vfield^{\infty}, \intenfield^{\infty})$ is a minimizer to 
$\ObjFunc_{\gamma, \tau}(\Cdot; \data) \colon \VelocitySpace{2}{V \times \RecSpace} \to \Real$.

Before proceeding, we introduce some notation in order to simplify the expressions. Define 
\begin{equation}\label{eq:Notation}
  \templateflow{n}{t} := \templateflow{\vfield^n,\intenfield^n}{t}
  \quad\text{and}\quad
  \diffeoflow{n}{s,t} := \diffeoflow{\vfield^n}{s,t}
  \quad\text{for $n \in \Natural \bigcup \{ \infty \}$.}
\end{equation}
Hence, assuming geometric group action \cref{eq:GeometricGroupAction} and using \cref{eq.FlowDiffeo}, we can write  
\[
 \ObjFunc_{\gamma, \tau}(\vfield^n, \intenfield^n; \data) 
   = \frac{\gamma}{2} \Vert \vfield^n \Vert_2^2 + \frac{\tau}{2} \Vert \intenfield^n \Vert_2^2 
        + \DataDiscr\Bigl( \ForwardOp\bigl( \templateflow{n}{1} \circ \diffeoflow{n}{1,0} \bigr), \data\Bigr)
\]
for $n \in \Natural \bigcup \{ \infty \}$. Assume next that the following holds: 
\begin{equation}\label{eq:Claim1}
 \templateflow{n}{1} \circ \diffeoflow{n}{1,0} \rightharpoonup \templateflow{\infty}{1} \circ \diffeoflow{\infty}{1,0}
    \quad\text{as $n \to \infty$.}
\end{equation}   
The data discrepancy term $\DataDiscr(\Cdot, \data) \colon \DataSpace \to \Real$ is weakly lower semi continuous and 
the forward operator $\ForwardOp \colon \RecSpace \to \DataSpace$ is continuous, so $\DataDiscr(\Cdot, \data) \circ \ForwardOp$ is also weakly lower semi continuous and then \cref{eq:Claim1} implies
\begin{equation}\label{eq:Claim2}
 \DataDiscr\bigl(\ForwardOp(\templateflow{\infty}{1} \circ \varphi^{\infty}_{1,0} ), \data \bigr) 
 \leq 
 \liminf_{n \to \infty} \DataDiscr(\ForwardOp(\templateflow{n}{1} \circ \varphi^n_{1,0} ), \data).  
\end{equation}   
Furthermore, from the weak convergences of $\vfield^n$ and $\intenfield^n$, we get
\begin{equation}\label{eq:Claim3}
 \frac{\gamma}{2} \Vert \vfield^{\infty} \Vert_2^2 + \frac{\tau}{2} \Vert \intenfield^{\infty} \Vert_2^2 
    \leq 
    \liminf_{n \to \infty}  \Bigl[ \frac{\gamma}{2} \Vert \vfield^n \Vert_2^2 + \frac{\tau}{2} \Vert \intenfield^n \Vert_2^2 \Bigr].
\end{equation}   
Hence, combining \cref{eq:Claim2,eq:Claim3} we obtain
\[
   \ObjFunc_{\gamma, \tau}(\vfield^{\infty}, \intenfield^{\infty}; \data) 
   \leq \displaystyle{\lim_{n \to \infty}} \ObjFunc_{\gamma, \tau}(\vfield^n, \intenfield^n; \data).
\]
Since $\bigl\{ (\vfield^n, \intenfield^n) \bigr\}_n \subset \VelocitySpace{2}{V \times \RecSpace}$ is a minimizing sequence, this yields 
\[ 
\ObjFunc_{\gamma, \tau}(\vfield^{\infty}, \intenfield^{\infty}; \data) = \inf_{(\vfield, \intenfield) \in \VelocitySpace{2}{V \times \RecSpace}} 
\ObjFunc_{\gamma, \tau}(\vfield, \intenfield; \data),
\]
which proves $(\vfield^{\infty}, \intenfield^{\infty}) \in \VelocitySpace{2}{V \times \RecSpace}$ is a minimizer to $\ObjFunc_{\gamma, \tau}(\Cdot; \data)$.

Hence, to finalize the proof we need to show that \cref{eq:Claim1} holds. We start by observing that the solution of \cref{Eq:TemplateEvolution} 
can be written as  
\begin{equation}\label{Eq:TemplateEvolutionSolution}
  \templateflow{n}{t} := \templateflow{n}{0}(x) + \int_0^t \intenfield^n\bigl(s, \diffeoflow{n}{0,s}(x) \bigr) \der s  
  \quad
  \text{for $n \in \Natural \cup \{\infty \}$,}
\end{equation}
and note that $(t,x) \mapsto \templateflow{n}{t}(x) \in C([0,1] \times \domain, \Real)$. 
Next, we claim that 
\[ \templateflow{n}{1} \rightharpoonup \templateflow{\infty}{1} \quad\text{for some $\templateflow{\infty}{1} \in \RecSpace$,} \]
which is equivalent to
\begin{equation}\label{eq:Claim1:p1}
 \lim_{n \to \infty} \langle \templateflow{n}{1} - \templateflow{\infty}{1}, \imageother \rangle = 0 
   \quad\text{for any $\imageother \in L^2(\domain, \Real)$.}
\end{equation}   
To prove \cref{eq:Claim1:p1}, note first that since continuous functions are dense in $L^2$, it is enough to show \cref{eq:Claim1:p1} holds for 
$\imageother \in C_0(\domain, \Real)$. Next, 
\begin{align}
\langle \templateflow{n}{1} - \templateflow{\infty}{1}, \imageother \rangle 
  &= \int_\domain \int_0^t \Bigl(\intenfield^n\bigl( s, \diffeoflow{n}{0,s}(x) \bigr) - \intenfield^{\infty}\bigl( s, \diffeoflow{\infty}{0,s}(x) \bigr) \Bigr) \imageother(x) \der s \der x \\
  &= \int_\domain \int_0^t \Bigl(\intenfield^n\bigl( s, \diffeoflow{n}{0,s}(x) \bigr) - \intenfield^n\bigl( s, \diffeoflow{\infty}{0,s}(x) \bigr) \Bigr) \imageother(x) \der s \der x \label{Eq:ImageEqTerm1} \\  
  &\quad 
  + \int_\domain \int_0^t \Bigl(\intenfield^n\bigl( s, \diffeoflow{n}{0,s}(x) \bigr) - \intenfield^{\infty}\bigl( s, \diffeoflow{\infty}{0,s}(x) \bigr) \Bigr) \imageother(x) \der s \der x. \label{Eq:ImageEqTerm2}   
\end{align}
Let us now take a closer look at the term in \cref{Eq:ImageEqTerm1}:
\begin{multline*}
\int_\domain \int_0^t \Bigl( \intenfield^n\bigl(s, \diffeoflow{n}{0,s}(x)\bigr)  
    - \intenfield^n\bigl(s, \diffeoflow{\infty}{0,s}(x)\bigr) \Bigr) \imageother(x) \der s \der x 
\\\shoveleft{\qquad 
  = \int_\domain \int_0^t \intenfield^n(s,x) \imageother\bigl(\diffeoflow{n}{0,s}(x)\bigr) 
         \bigl\vert D\diffeoflow{n}{0,s}(x) \bigr\vert \der s \der x 
}\\
    -  \int_\domain \int_0^t \intenfield^{\infty}(s,x) \imageother\bigl(\diffeoflow{\infty}{0,s}(x)\bigr) 
         \bigl\vert D\diffeoflow{\infty}{0,s}(x) \bigr\vert \der s \der x 
\\\shoveleft{\qquad      
 =  \int_\domain \int_0^t \intenfield^n(s,x) \Big( \imageother\bigl(\diffeoflow{n}{0,s}(x) \bigr) 
        \bigl\vert D\diffeoflow{n}{0,s}(x) \bigr\vert - \imageother\bigl(\diffeoflow{\infty}{0,s}(x)\bigr) 
        \bigl\vert D\diffeoflow{\infty}{0,s}(x)\bigr\vert 
     \Big) \der s \der x 
}\\
     - \int_\domain \int_0^t \Big( \intenfield^{\infty}(s,x) - \intenfield^n(s,x) \Big) 
         \imageother\bigl(\diffeoflow{\infty}{0,s}(x)\bigr) 
         \bigl\vert D\diffeoflow{\infty}{0,s}(x) \bigr\vert \der s \der x \\
 =   \langle \intenfield^n, \imageother^n - \imageother^{\infty} \rangle 
    - \langle \intenfield^{\infty} - \intenfield^n , \imageother^{\infty} \rangle
\end{multline*}
where $\imageother^n \in \VelocitySpace{2}{\RecSpace}$ is defined as 
\begin{equation}\label{eq:Jfunc}
  \imageother^n(s,x) :=  \imageother\bigl(\diffeoflow{n}{s,0}(x)\bigr) \bigl\vert D \diffeoflow{n}{s,0}(x) \bigr\vert
    \quad\text{for $n \in \Natural \bigcup \{ \infty \}$.}
\end{equation}
By \cref{prop:Regularity} we know that $\diffeoflow{n}{s,0} \to \diffeoflow{\infty}{s,0}$ and $D \diffeoflow{n}{s,0} \to D \diffeoflow{\infty}{s,0}$ 
uniformly on $\domain$. Since $\imageother$ is continuous on $\domain$, we conclude that $\Vert  \imageother^n - \imageother^{\infty} \Vert_2 $ tends to $0$.
Since $\intenfield^n$ is bounded, we conclude that 
\[  \langle \intenfield^n, \imageother^n - \imageother^{\infty} \rangle 
    \leq 
    \Vert \intenfield^n\Vert_2 \cdot \Vert  \imageother^n - \imageother^{\infty} \Vert_2  
    \to 0.  
\]
Furthermore, since $\intenfield^n \rightharpoonup \intenfield^{\infty}$, we also get 
$\langle \intenfield^{\infty} -   \intenfield^n, \imageother^{\infty} \rangle \to 0$. 
Hence, we have shown that \cref{Eq:ImageEqTerm1} tends to zero, i.e., 
\[ \lim_{n  \to \infty} 
   \int_\domain \int_0^t \Bigl( \intenfield^n\bigl(s,\diffeoflow{n}{0,s}(x)\bigr) - \intenfield^n\bigl(s,\diffeoflow{\infty}{0,s}(x)\bigr) \Bigr) 
   \imageother(x) \der s \der x = 0.
\]   
Finally, we consider the term in \cref{Eq:ImageEqTerm2}. Since $ \intenfield^n \rightharpoonup \intenfield^{\infty}$, we immediately obtain 
\begin{equation*}
 \int_\domain \int_0^t \Bigl( \intenfield^n\bigl(s,\varphi^{\infty}_s(x)\bigr) - \intenfield^{\infty}\bigl(s,\varphi^{\infty}_s(x)\bigr) \Bigr) \imageother(x) \der s \der x  
 = \bigl\langle \intenfield^n - \intenfield^{\infty}, \imageother^{\infty} \bigr\rangle \to 0.
\end{equation*}
To summarise, we have just proved that both terms \cref{Eq:ImageEqTerm1,Eq:ImageEqTerm2} tend to $0$ as $n \to \infty$,
which implies that \cref{eq:Claim1:p1} holds, i.e., $\templateflow{n}{1} \rightharpoonup \templateflow{\infty}{1}$.

To prove \cref{eq:Claim1}, i.e., $\templateflow{n}{1} \circ \varphi^n_{1,0} \rightharpoonup \templateflow{\infty}{1} \circ \varphi^{\infty}_{1,0}$,
we need to show that 
\begin{equation}\label{eq:Claim1simpl}
  \lim_{n \to \infty} \bigl\langle \templateflow{n}{1} \circ \varphi^n_{1,0} 
   -  \templateflow{\infty}{1} \circ \varphi^{\infty}_{1,0}, \imageother \bigr\rangle 
  = 0
  \quad\text{for any $\imageother \in L^2(\domain, \Real)$,}
\end{equation}
and as before, we may assume $\imageother \in C_0 (\domain, \Real)$. Using \cref{eq:Jfunc} we can express the term in 
\cref{eq:Claim1simpl} whose limit we seek as
\begin{multline*}
\bigl\vert  \langle \templateflow{n}{1} \circ \varphi^n_{1,0} -  \templateflow{\infty}{1} \circ \varphi^{\infty}_{1,0}, \imageother \rangle \bigr\vert  
\\ 
\leq  
  \Bigl\vert 
  \bigl\langle 
    \templateflow{n}{1}, \imageother^n(1,\Cdot) - \imageother^{\infty}(1,\Cdot)
  \bigr\rangle 
  \Bigr\vert  
+ \Bigl\vert  \bigl\langle 
     \templateflow{n}{1} - \templateflow{\infty}{1}, \imageother^{\infty}(1,\Cdot)
    \bigr\rangle \Bigr\vert 
\\
\leq 
\Vert \templateflow{n}{1}\Vert  \cdot 
\bigl\Vert \imageother^n(1,\Cdot) - \imageother^{\infty}(1,\Cdot) \bigr\Vert  
+  \bigl\vert \langle \templateflow{n}{1} - \templateflow{\infty}{1},  \imageother^{\infty}(1,\Cdot) \rangle \bigr\vert.
\end{multline*}
Since $\Vert \templateflow{n}{1}\Vert$ is bounded (because $\Vert \intenfield^n\Vert$ is bounded) and since $\templateflow{n}{1} \rightharpoonup \templateflow{\infty}{1}$ (which we shoed before), all terms above tend to $0$ as $n \to \infty$, i.e., \cref{eq:Claim1simpl} holds.

This concludes the proof of \cref{eq:Claim1}, which in turn implies the existence of a minimizer of $\ObjFunc_{\gamma, \tau}(\Cdot; \data)$.
\end{proof}

Our next result shows that the solution to the indirect registration problem is (weakly) continuous w.r.t. variations in data, and as such, it is a 
kind of stability result.
\begin{proposition}[Stability]\label{prop:Stability}
Let $\{ \data_k \}_k \subset \DataSpace$ and assume this sequence converges (in norm) to some $\data \in \DataSpace$. 
Next, for each $\gamma, \tau >0$ and each $k$, define $(\vfield^k, \intenfield^k) \in \VelocitySpace{2}{V \times \RecSpace}$ as 
\[ (\vfield^k, \intenfield^k) = \argmin_{(\vfield, \intenfield)} \ObjFunc_{\gamma, \tau}(\vfield, \intenfield; \data_k). \]
Then there exists a sub sequence of $(\vfield^k, \intenfield^k)$ that converges weakly to a minimizer of $\ObjFunc_{\gamma, \tau}(\Cdot; \data)$
in \cref{Eq:ObjectiveFunctional}.
\end{proposition}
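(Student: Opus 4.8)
The plan is to mimic the existence proof, but now with a diagonal/subsequence argument that tracks the convergence of the data as well. First I would establish a uniform bound on the minimizers $(\vfield^k, \intenfield^k)$. Since each is a minimizer of $\ObjFunc_{\gamma, \tau}(\Cdot; \data_k)$, we have $\ObjFunc_{\gamma, \tau}(\vfield^k, \intenfield^k; \data_k) \leq \ObjFunc_{\gamma, \tau}(\vzero, \vzero; \data_k) = \DataDiscr(\ForwardOp(\template), \data_k)$, and since $\data_k \to \data$ in norm while $\DataDiscr(\ForwardOp(\template), \Cdot)$ is bounded on convergent sequences (here one needs a mild assumption, e.g.\ that $\DataDiscr$ is continuous in its second argument, or at least locally bounded), the right-hand side is bounded uniformly in $k$. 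Because $\DataDiscr \geq 0$ after the affine normalization (or is bounded below), the regularization terms $\frac{\gamma}{2}\Vert \vfield^k\Vert_2^2 + \frac{\tau}{2}\Vert \intenfield^k\Vert_2^2$ are bounded uniformly, so $\{\vfield^k\}_k$ is bounded in $\VelocitySpace{2}{V}$ and $\{\intenfield^k\}_k$ is bounded in $\VelocitySpace{2}{\RecSpace}$.

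Next I would extract, by reflexivity of both spaces, a subsequence (not relabelled) with $\vfield^k \rightharpoonup \vfield^\infty$ in $\VelocitySpace{2}{V}$ and $\intenfield^k \rightharpoonup \intenfield^\infty$ in $\VelocitySpace{2}{\RecSpace}$. The claim is then that $(\vfield^\infty, \intenfield^\infty)$ minimizes $\ObjFunc_{\gamma, \tau}(\Cdot; \data)$. The crucial ingredient, exactly as in \cref{prop:Existence}, is that $\templateflow{k}{1} \circ \diffeoflow{k}{1,0} \rightharpoonup \templateflow{\infty}{1} \circ \diffeoflow{\infty}{1,0}$ in $\RecSpace$; this is proved verbatim as \cref{eq:Claim1} in the existence proof, since that argument only used the weak convergence of the velocity and intensity fields together with \cref{prop:Regularity}, and is insensitive to the data. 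Combining this with the continuity of $\ForwardOp$ gives $\ForwardOp(\templateflow{k}{1}\circ\diffeoflow{k}{1,0}) \rightharpoonup \ForwardOp(\templateflow{\infty}{1}\circ\diffeoflow{\infty}{1,0})$.

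The step that genuinely differs from the existence proof is handling the data term with a \emph{moving} argument $\data_k \to \data$. Here I would invoke a joint lower-semicontinuity property: $\DataDiscr$ is weakly lower semi-continuous in its first argument and norm-continuous (or at least weakly-strongly lower semi-continuous jointly) in the second, so that
\[
  \DataDiscr\bigl(\ForwardOp(\templateflow{\infty}{1}\circ\diffeoflow{\infty}{1,0}), \data\bigr)
  \leq \liminf_{k\to\infty} \DataDiscr\bigl(\ForwardOp(\templateflow{k}{1}\circ\diffeoflow{k}{1,0}), \data_k\bigr).
\]
Together with weak lower semi-continuity of the norm-squared regularization terms (as in \cref{eq:Claim3}), this yields $\ObjFunc_{\gamma,\tau}(\vfield^\infty, \intenfield^\infty; \data) \leq \liminf_k \ObjFunc_{\gamma,\tau}(\vfield^k, \intenfield^k; \data_k)$. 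Finally, to show this liminf equals the infimum over all $(\vfield, \intenfield)$, I would fix an arbitrary competitor $(\vfield, \intenfield)$ and use minimality of $(\vfield^k, \intenfield^k)$ for $\data_k$: $\ObjFunc_{\gamma,\tau}(\vfield^k, \intenfield^k; \data_k) \leq \ObjFunc_{\gamma,\tau}(\vfield, \intenfield; \data_k) \to \ObjFunc_{\gamma,\tau}(\vfield, \intenfield; \data)$, where the last convergence again uses continuity of $\DataDiscr$ in the data argument (the regularization terms do not depend on $\data$). Chaining the inequalities gives $\ObjFunc_{\gamma,\tau}(\vfield^\infty, \intenfield^\infty; \data) \leq \ObjFunc_{\gamma,\tau}(\vfield, \intenfield; \data)$ for all competitors, so $(\vfield^\infty, \intenfield^\infty)$ is the desired minimizer.

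The main obstacle, I expect, is not the functional-analytic machinery — which is essentially a rerun of \cref{prop:Existence} — but making precise exactly which continuity/semicontinuity hypotheses on $\DataDiscr$ in its \emph{second} (data) slot are needed, since the proposition as stated only assumes weak lower semi-continuity in the first slot. One must either add such a hypothesis explicitly or verify it for the specific data discrepancies of interest (e.g.\ squared Hilbert-space norm $\DataDiscr(y, \data) = \Vert y - \data\Vert^2$, for which both the upper bound estimate and the liminf inequality are elementary). Assuming a reasonable such hypothesis, the proof goes through cleanly.
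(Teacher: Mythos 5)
Your proposal follows essentially the same route as the paper's proof: uniform boundedness of the minimizers via comparison with $(\vzero,0)$, extraction of weakly convergent subsequences, reuse of the weak-convergence claim \cref{eq:Claim1} from \cref{prop:Existence}, a liminf inequality for the data term, and the final chaining through minimality against an arbitrary competitor. You are in fact slightly more careful than the paper, which silently uses continuity (and joint weak--strong lower semicontinuity) of $\DataDiscr$ in its second argument in \cref{eq:Ineq2} and in the limit $\ObjFunc_{\gamma,\tau}(\vfield,\intenfield;\data_k)\to\ObjFunc_{\gamma,\tau}(\vfield,\intenfield;\data)$ without stating that hypothesis; flagging it explicitly is the right call.
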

\begin{proof}
$\ObjFunc_{\gamma, \tau}(\Cdot; \data_k)$ has a minimizer $(\vfield^k,\intenfield^k) \in \VelocitySpace{2}{V \times \RecSpace}$ for any $\data_k \in \DataSpace$ (\cref{prop:Existence}). 
The idea is first to show that the sequences $(\vfield^k)_k$ and $(\intenfield^k)_k$ are bounded. 
Next, we show that there exists a weakly converging subsequence of $(\vfield^k, \intenfield^k)$ that converges to a minimizer $(\vfield,\intenfield)$ of $\ObjFunc_{\gamma, \tau}(\Cdot; \data)$, which also exists due to \cref{prop:Existence}.

Since $(\vfield^k, \intenfield^k) $ minimizes $\ObjFunc_{\gamma, \tau}(\Cdot; \data_k)$, by \cref{Eq:ObjectiveFunctional} we have 
\begin{equation}\label{eq:BoundedVfield}
  \Vert \vfield^k \Vert_2^2 
    \leq \frac{2}{\gamma} \ObjFunc_{\gamma, \tau}(\Cdot; \data_k)(\vfield^k, \intenfield^k) 
    \leq \frac{2}{\gamma} \ObjFunc_{\gamma, \tau}(\Cdot; \data_k)(\vzero, 0)
    \quad\text{for each $k$.}
\end{equation}    
Observe now that if $\vfield=\vzero$ and $\intenfield=0$, then $\diffeoflow{\vfield}{0,1}=\Id$ by \cref{Eq:FlowEq} and $\templateflow{\vfield,\intenfield}{1} = \template$ by \cref{Eq:TemplateEvolution}, so in particular 
\[ \DeforOp\bigl(\diffeoflow{\vfield}{0,1}, \templateflow{\vfield,\intenfield}{1})\bigr) = \template
   \quad\text{whenever $\vfield=\vzero$ and $\intenfield=0$,}
\]
Hence, $\ObjFunc_{\gamma, \tau}(\Cdot; \data_k)(\vzero, 0) 
    = \DataDiscr\bigl( \ForwardOp(\template), \data_k \bigr)$
and, in addition, $\Vert \vfield \Vert_2 = 0$ and $\Vert \intenfield \Vert_2 = 0$, so \cref{eq:BoundedVfield} becomes 
\begin{equation}\label{eq:BoundedVfield2}
  \Vert \vfield^k \Vert_2^2 \leq \frac{2}{\gamma} \DataDiscr\bigl( \ForwardOp(\template), \data_k \bigr)
  \to \DataDiscr(\ForwardOp(\template), \data) 
  \quad\text{as $k \to \infty$.}
\end{equation}    
In conclusion, the sequence $( \vfield^k )_k \subset \VelocitySpace{2}{V}$ is bounded.
In a similar way, we can show that $( \intenfield^k )_k \subset \VelocitySpace{2}{\RecSpace}$ is bounded.

The boundedness of both sequences implies that there are sub sequences to these that converge weakly to some 
element $\vfield^{\infty} \in \VelocitySpace{2}{V}$ and $\intenfield^{\infty} \in \VelocitySpace{2}{\RecSpace}$, respectively.  
Thus, to complete the proof, we need to show that $(\vfield^{\infty}, \intenfield^{\infty}) \in \VelocitySpace{2}{V \times \RecSpace}$ minimizes $\ObjFunc_{\gamma, \tau}(\Cdot; \data)$, i.e., that 
\[ \ObjFunc_{\gamma, \tau}(\vfield^{\infty},\intenfield^{\infty}; \data) \leq \ObjFunc_{\gamma, \tau}(\vfield, \intenfield; \data)
   \quad\text{holds for any $(\vfield, \intenfield) \in \VelocitySpace{2}{V \times \RecSpace}$.} 
\]
From the weak convergences, we obtain
\begin{multline}\label{eq:Ineq1}
\frac{\gamma}{2} \Vert \vfield^{\infty}\Vert_2^2 + \frac{\tau}{2} \Vert  \intenfield^{\infty} \Vert_2^2 
   \leq \frac{\gamma}{2} \liminf_{k} \Vert \vfield^k \Vert_2^2 + \frac{\tau}{2} \liminf_{k} \Vert \intenfield^k \Vert_2^2 
\\
  \leq \frac{1}{2} \liminf_{k} \Bigl[ \gamma \Vert \vfield^k \Vert_2^2 + \tau \Vert \intenfield^k \Vert_2^2 \Bigr]. 
\end{multline}
The weak convergence also implies (see proof of \cref{prop:Existence}) that 
\[ \DeforOp\bigl(\diffeoflow{k}{0,1}, \templateflow{\infty}{1}\bigr) 
   \rightharpoonup  
   \DeforOp\bigl(\diffeoflow{\infty}{0,1}, \templateflow{\infty}{1}\bigr) 
   \quad\text{in $\RecSpace$.}
\]
In the above, we have used the notational convention introduced in \cref{eq:Notation}.
By the lower semi-continuity of $\DataDiscr$, we get  
\begin{equation}\label{eq:Ineq2}
  \DataDiscr\Bigl(\ForwardOp\bigl(\DeforOp(\diffeoflow{\infty}{0,1},\templateflow{\infty}{1})\bigr),\data \Bigr)
  \leq \liminf_k
  \DataDiscr\Bigl(\ForwardOp\bigl(\DeforOp(\diffeoflow{k}{0,1},\templateflow{k}{1})\bigr),\data_k \Bigr).
\end{equation}

Hence, 
\begin{multline}\label{eq:Ineq3}
\ObjFunc_{\gamma, \tau}(\vfield^{\infty},\intenfield^{\infty}; \data) 
  =  \frac{\gamma}{2} \Vert \vfield^{\infty} \Vert_2^2 + \frac{\tau}{2} \Vert \intenfield^{\infty} \Vert_2^2 
       +  \DataDiscr\Bigl(\ForwardOp\bigl(\DeforOp(\diffeoflow{\infty}{0,1},\templateflow{\infty}{1})\bigr),\data \Bigr).
\\        
   \leq \frac{1}{2} \liminf_{k} \Bigl[ \gamma \Vert \vfield^k \Vert_2^2 + \tau \Vert \intenfield^k \Vert_2^2 \Bigr]    
    + \liminf_{k} \DataDiscr\Bigl(\ForwardOp\bigl(\DeforOp(\diffeoflow{k}{0,1},\templateflow{k}{1})\bigr),\data_k \Bigr)
\\
   \leq  \liminf_{k} \ObjFunc_{\gamma, \tau}(\vfield^k, \intenfield^k; \data_k).
\end{multline}
Next, since $(\vfield^k, \intenfield^k) \in \VelocitySpace{2}{V \times \RecSpace}$ minimizes $\ObjFunc_{\gamma, \tau}(\Cdot; \data_k)$, we get 
\begin{equation*}
\ObjFunc_{\gamma, \tau}(\vfield^{\infty},\intenfield^{\infty}; \data) 
   \leq 
      \liminf_{k} \ObjFunc_{\gamma, \tau}(\vfield, \intenfield; \data_k) 
\end{equation*}
for any $(\vfield, \intenfield) \in \VelocitySpace{2}{V \times \RecSpace}$. 
Furthermore, 
\[ \ObjFunc_{\gamma, \tau}(\vfield, \intenfield; \data_k) \to \ObjFunc_{\gamma, \tau}(\vfield, \intenfield; \data), \]
so  
\[ \ObjFunc_{\gamma, \tau}(\vfield^{\infty},\intenfield^{\infty}; \data) \leq \ObjFunc_{\gamma, \tau}(\vfield, \intenfield; \data)
  \quad\text{for all $(\vfield, \intenfield) \in \VelocitySpace{2}{V \times \RecSpace}$.}
\]
In particular, we have shown that $(\vfield^{\infty},\intenfield^{\infty})$ minimises $\ObjFunc_{\gamma, \tau}(\Cdot; \data)$.
\end{proof}

Our final results concerns convergence, which investigates the behaviour of the solution as data error tends to zero and regularization parameters are adapted accordingly through a parameter choice rule against the data error.
\begin{proposition}[Convergence]\label{prop:Conv}
Let $\data \in \DataSpace$ and assume 
\[ \ForwardOp\bigl(\DeforOp(\diffeoflow{\vfield}{0,1}, \templateflow{\vfield,\intenfield}{1})\bigr) =  \data
    \quad\text{for some $(\vfield, \intenfield) \in \VelocitySpace{2}{V \times \RecSpace}$.}
\]
Next, for parameter choice rules $\delta \mapsto \gamma(\delta)$ and $\delta \mapsto \tau(\delta)$ with $\delta>0$, 
define 
\[
 (\vfield_{\delta},\intenfield_{\delta})
   \in \argmin_{(\vfield, \intenfield)} \ObjFunc_{\gamma(\delta), \tau(\delta)}(\vfield, \intenfield; \data + \datanoise_\delta)
\]
where $\datanoise_\delta \in \DataSpace$ (data error) has magnitude $\Vert \datanoise_\delta \Vert = \delta$. 
Finally, assume that $\delta \mapsto \gamma(\delta)/\tau(\delta)$ and $\delta \mapsto \tau(\delta)/\gamma(\delta)$ are bounded, 
and 
\begin{equation*}
    \lim_{\delta \to 0} \gamma(\delta) 
    =  \lim_{\delta \to 0} \tau(\delta) 
    =  \lim_{\delta \to 0} \frac{\delta^2}{\gamma(\delta)} 
    =  \lim_{\delta \to 0} \frac{\delta^2}{\tau(\delta)} 
    = 0.
\end{equation*}
Then, for any sequence $\delta_k \to 0$ there exists a sub-sequence $\delta_{k'}$ such that $(\vfield_{\delta_{k'}},\intenfield_{\delta_{k'}})$ converges weakly to a $(\vfieldtrue,\intenfieldtrue)$ satisfying $\ForwardOp\bigl(\DeforOp(\diffeoflow{\vfieldtrue}{0,1}, \templateflow{\vfieldtrue,\intenfieldtrue}{1})\bigr) = \data$.
\end{proposition}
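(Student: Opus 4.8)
The plan is to run the standard Tikhonov-type convergence argument used for \cref{prop:Stability}, fed with the weak-compactness and weak-continuity facts already established inside the proof of \cref{prop:Existence}. Write $\gamma_k := \gamma(\delta_k)$, $\tau_k := \tau(\delta_k)$ and $(\vfield^k,\intenfield^k) := (\vfield_{\delta_k},\intenfield_{\delta_k})$, and fix once and for all a pair $(\vfield^{\dagger},\intenfield^{\dagger})\in\VelocitySpace{2}{V \times \RecSpace}$ with $\ForwardOp\bigl(\DeforOp(\diffeoflow{\vfield^{\dagger}}{0,1},\templateflow{\vfield^{\dagger},\intenfield^{\dagger}}{1})\bigr)=\data$, which exists by hypothesis. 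Since $(\vfield^k,\intenfield^k)$ minimises $\ObjFunc_{\gamma_k,\tau_k}(\Cdot;\data+\datanoise_{\delta_k})$, comparing its value with the value at $(\vfield^{\dagger},\intenfield^{\dagger})$ and using $\DataDiscr\ge0$ gives
\begin{multline*}
\frac{\gamma_k}{2}\Vert\vfield^k\Vert_2^2 + \frac{\tau_k}{2}\Vert\intenfield^k\Vert_2^2 + \DataDiscr\Bigl(\ForwardOp\bigl(\DeforOp(\diffeoflow{\vfield^k}{0,1},\templateflow{\vfield^k,\intenfield^k}{1})\bigr),\data+\datanoise_{\delta_k}\Bigr) \\ \le \frac{\gamma_k}{2}\Vert\vfield^{\dagger}\Vert_2^2 + \frac{\tau_k}{2}\Vert\intenfield^{\dagger}\Vert_2^2 + \DataDiscr(\data,\data+\datanoise_{\delta_k}).
\end{multline*}
Here I invoke the mild standing assumption behind the scaling $\delta^2/\gamma(\delta)\to0$, namely that $\DataDiscr$ is comparable (above and below, near the diagonal) to the squared $\DataSpace$-norm of the difference of its arguments; in particular $\DataDiscr(\data,\data+\datanoise_{\delta})\le C\delta^2$.

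Dropping the two nonnegative rightmost terms on the left above and dividing by $\gamma_k/2$ yields $\Vert\vfield^k\Vert_2^2 \le \Vert\vfield^{\dagger}\Vert_2^2 + (\tau_k/\gamma_k)\Vert\intenfield^{\dagger}\Vert_2^2 + 2C\delta_k^2/\gamma_k$, which is bounded because $\tau(\delta)/\gamma(\delta)$ is bounded and $\delta^2/\gamma(\delta)\to0$; symmetrically $(\intenfield^k)_k$ is bounded in $\VelocitySpace{2}{\RecSpace}$ using $\gamma(\delta)/\tau(\delta)$ bounded and $\delta^2/\tau(\delta)\to0$. Moreover, since $\gamma_k,\tau_k,\delta_k\to0$, the same inequality forces $\DataDiscr\bigl(\ForwardOp(\DeforOp(\diffeoflow{\vfield^k}{0,1},\templateflow{\vfield^k,\intenfield^k}{1})),\data+\datanoise_{\delta_k}\bigr)\to0$, and then, by the comparability of $\DataDiscr$ with the squared norm together with $\Vert\datanoise_{\delta_k}\Vert=\delta_k\to0$, also $\DataDiscr\bigl(\ForwardOp(\DeforOp(\diffeoflow{\vfield^k}{0,1},\templateflow{\vfield^k,\intenfield^k}{1})),\data\bigr)\to0$.

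Since $\VelocitySpace{2}{V}$ and $\VelocitySpace{2}{\RecSpace}$ are Hilbert spaces, the two bounds give a subsequence $\delta_{k'}$ and elements $\vfieldtrue\in\VelocitySpace{2}{V}$, $\intenfieldtrue\in\VelocitySpace{2}{\RecSpace}$ with $\vfield^{k'}\rightharpoonup\vfieldtrue$ and $\intenfield^{k'}\rightharpoonup\intenfieldtrue$. The computation carried out in the proof of \cref{prop:Existence} to establish \cref{eq:Claim1} now applies with the obvious relabelling (replace $(\vfield^n,\intenfield^n)$ by $(\vfield^{k'},\intenfield^{k'})$ and $(\vfield^\infty,\intenfield^\infty)$ by $(\vfieldtrue,\intenfieldtrue)$), and yields $\DeforOp\bigl(\diffeoflow{\vfield^{k'}}{0,1},\templateflow{\vfield^{k'},\intenfield^{k'}}{1}\bigr)\rightharpoonup\DeforOp\bigl(\diffeoflow{\vfieldtrue}{0,1},\templateflow{\vfieldtrue,\intenfieldtrue}{1}\bigr)$ in $\RecSpace$. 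Combining this weak convergence with the weak lower semi-continuity of $\DataDiscr(\Cdot,\data)\circ\ForwardOp$ (exactly as in \cref{prop:Existence}), with $\DataDiscr\ge0$, and with $\DataDiscr(\ForwardOp(\DeforOp(\diffeoflow{\vfield^{k'}}{0,1},\templateflow{\vfield^{k'},\intenfield^{k'}}{1})),\data)\to0$ from the previous step, we get
\[
0 \le \DataDiscr\Bigl(\ForwardOp\bigl(\DeforOp(\diffeoflow{\vfieldtrue}{0,1},\templateflow{\vfieldtrue,\intenfieldtrue}{1})\bigr),\data\Bigr) \le \liminf_{k'}\DataDiscr\Bigl(\ForwardOp\bigl(\DeforOp(\diffeoflow{\vfield^{k'}}{0,1},\templateflow{\vfield^{k'},\intenfield^{k'}}{1})\bigr),\data\Bigr) = 0,
\]
and since $\DataDiscr(y,\data)=0$ forces $y=\data$, this is precisely $\ForwardOp\bigl(\DeforOp(\diffeoflow{\vfieldtrue}{0,1},\templateflow{\vfieldtrue,\intenfieldtrue}{1})\bigr)=\data$.

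I expect the only genuinely delicate point, beyond reusing \cref{prop:Existence,prop:Stability}, to be the transfer between the noisy fidelity $\DataDiscr(\Cdot,\data+\datanoise_{\delta})$ and the exact fidelity $\DataDiscr(\Cdot,\data)$, and the final step that a zero fidelity value implies the exact equation $\ForwardOp(\DeforOp(\cdots))=\data$; this is what makes the structural hypotheses on $\DataDiscr$ (nonnegativity, comparability with a squared Hilbert norm, vanishing only on the diagonal) necessary, and it is worth checking exactly which of these the paper has already imposed. Everything else — boundedness, extraction of a weakly convergent subsequence, and passing to the limit in the deformed template — is routine given the tools already in the excerpt.
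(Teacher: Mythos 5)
Your proposal is correct and follows essentially the same route as the paper: boundedness of $(\vfield^k,\intenfield^k)$ by comparison with a fixed exact solution, weak compactness, the weak-continuity of $(\vfield,\intenfield)\mapsto \DeforOp\bigl(\diffeoflow{\vfield}{0,1},\templateflow{\vfield,\intenfield}{1}\bigr)$ established in the proof of \cref{prop:Existence}, and lower semi-continuity of the data discrepancy to force the limit to solve the exact equation. The only (welcome) difference is that you make explicit the structural hypotheses on $\DataDiscr$ — nonnegativity, comparability with the squared norm near the diagonal, and vanishing only on the diagonal — which the paper uses implicitly when it bounds $\DataDiscr(\data,\data_k)$ by a power of $\delta_k$ and concludes from $\DataDiscr=0$ that the forward equation holds.
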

\begin{proof}
Let $(\delta_k)$ be a sequence converging to $0$ and, for each $k$, let us denote
\[ \data_k := \data + e_{\delta_k}, \quad \vfield^{k} := \vfield_{\delta_k}, \quad\text{and}\quad
   \intenfield^{k}:= \intenfield_{\delta_k}.
\]
Similarly to previous proofs, we will show that the sequences $(\vfield^k)$ and $(\intenfield^k)$ are bounded, and then that the weakly converging subsequence that can be extracted from $(\vfield^k, \intenfield^k)$ converges to a suitable solution.

Define $\gamma_k := \gamma(\delta_k)$ and $\tau_k := \gamma(\delta_k)$. Then, for each $k$ we have 
\begin{align*}
\Vert  \vfield^k \Vert_2^2 
  &\leq \frac{1}{\gamma_k} \ObjFunc_{\gamma_k, \tau_k, \data_k} (\vfield^k, \intenfield^k) 
    \leq  \frac{1}{\gamma_k} \ObjFunc_{\gamma_k, \tau_k, \data_k} (\widehat{\vfield}, \hat{\intenfield}) 
\\ 
  &=  \frac{1}{\gamma_k}  \Bigl( \gamma_k \Vert \widehat{\vfield}\Vert_2^2 + \tau_k \Vert \hat{\intenfield}\Vert_2^2 + \DataDiscr({\data}, \data_k) \Bigr) 
  \leq  \Vert \widehat{\vfield}\Vert_2^2 + \frac{\tau_k}{\gamma_k} \Vert \hat{\intenfield}\Vert_2^2 + \frac{\delta_k}{\gamma_k}. 
\end{align*}
From the assumptions on the parameter choice rules, we conclude that $(\vfield^k) \subset \VelocitySpace{2}{V}$ is bounded.
Similarly, one can show that $(\intenfield^k)\subset \VelocitySpace{2}{\RecSpace}$ is bounded.

From the above, we conclude that there is a subsequence of $(\vfield^k,\intenfield^k)$ that converges weakly to $(\widetilde{\vfield},\widetilde{\intenfield})$ in $\VelocitySpace{2}{V} \times \VelocitySpace{2}{V}$. Then (see proof of \cref{prop:Existence})
\[ \DataDiscr\Bigl(\ForwardOp\bigl(\DeforOp(\diffeoflow{\widetilde{\vfield}}{0,1}, \templateflow{\widetilde{\vfield},\widetilde{\intenfield}}{1}) \bigr), \data \Bigr) 
   \leq 
   \liminf_k \DataDiscr\Bigl(\ForwardOp(\DeforOp\bigl(\diffeoflow{\vfield_k}{0,1},\templateflow{\widetilde{\vfield},\widetilde{\intenfield}}{1}) \bigr), \data_k \Bigr).
\]
Furthermore, the above quantity converges to $0$ since 
\begin{align*}
\DataDiscr\Bigl(\ForwardOp\bigl(\DeforOp(\diffeoflow{\vfield^k}{0,1}, \templateflow{\vfield^k,\intenfield^k}{1})\bigr), \data_k \Bigr) 
&\leq \ObjFunc_{\gamma_k, \tau_k, \data_k} (\vfield^k, \intenfield^k) 
\leq \ObjFunc_{\gamma_k, \tau_k, \data_k} (\widehat{\vfield}, \hat{\intenfield}) 
\\
&= \gamma_k \Vert \widehat{\vfield}\Vert_2^2 + \tau_k \Vert \hat{\intenfield}\Vert_2^2 + \DataDiscr(\data,\data_k) 
\to 0 \quad\text{and $k \to \infty$.}
\end{align*}
Hence, $\ForwardOp\bigl(\DeforOp(\diffeoflow{\widetilde{\vfield}}{0,1}, \templateflow{\widetilde{\vfield},\widetilde{\intenfield}}{1})\bigr) = \data$.
\end{proof}

\subsection{Numerical implementation}\label{sec:Numericalmpl}

In order to solve \cref{eq.MetaOptim}, we use a gradient descent scheme on the variable $(\vfield, \intenfield) \in \VelocitySpace{2}{V \times \RecSpace}$ with a uniform discretization of the interval $[0, 1]$ into $N$ parts, i.e., $t_i = 1/N$ for $i=0, \ldots, N$ and the gradient descent is performed on $\vfield (t_i, \Cdot)$, $\intenfield (t_i, \Cdot)$, for $i = 0, 1, \ldots, N$. An alternative approach developed in \cite{neumayer2018regularization} extends the time discrete path method in \cite{EfRuSc18} to the indirect setting. 

In order to compute numerical integrations, we use a Euler scheme on this discretization. The flow equation \eqref{Eq:FlowEq} is computed using the following approximation with small deformations: $\diffeoflow{\vfield}{t_i, 0} \approx \diffeoflow{\vfield}{t_{i-1}, 0} \circ \Big( \Id - \frac{1}{N} \vfield(t_{i-1},\Cdot) \Big)$. 

\Cref{alg:GradientComputation} presents the implementation\footnote{%
\href{https://github.com/bgris/odl/tree/IndirectMatching/examples/Metamorphosis}{https://github.com/bgris/odl/tree/IndirectMatching/examples/Metamorphosis}} for computing the gradient of $\ObjFunc$ and it is based on expressions from \cref{Appendix:GradientComputation}). The computation of the Jacobian determinant $\Bigl\vert \Det(\der{ \diffeoflow{\vfield}{t_i, 1}}(x)) \Bigr\vert$ at each time point is based on the following approximation similar to \cite{Chen:2018aa}: 
\[ \Bigl\vert \Det\bigl(\der{ \diffeoflow{\vfield}{t_i, 1}}(x)\bigr) \Bigr\vert 
      \approx \Bigl(1 + \frac{1}{N} \Div\vfield (t_i,\Cdot)\Bigr)\bigl\vert \D\diffeoflow{\vfield }{t_{i+1},1}\bigr\vert \circ 
      \Bigl(\Id  + \frac{1}{N} \vfield (t_i,\Cdot)\Bigr).
\]

\begin{algorithm}
\caption{Computation of $\nabla \ObjFunc (\vfield, \intenfield)$.}\label{alg:GradientComputation}
{\small
\begin{algorithmic}[1]
  \Require 
    $\vfield (t_i,\Cdot)$ and $\intenfield (t_i,\Cdot)$ with $t_i \gets i/N$ for $i = 0, 1, \ldots, N$.
  \For{ $i = 1, \ldots, N$}
    \Comment{Compute $\intenfield  (t_i,\Cdot) \circ \diffeoflow{\vfield }{0, t_i}$}
    \State $temp \gets \intenfield  (t_i,\Cdot)$
    \For{$j = i-1, \ldots, 0$}
      \State 
         $temp
           \gets temp \circ \Bigl(\Id + \frac{1}{N}\vfield (t_j,\Cdot)\Bigr)$
    \EndFor
    \State 
         $\intenfield  (t_i,\Cdot) \circ \diffeoflow{\vfield }{0, t_{i}}  \gets temp $
  \EndFor
  \For{ $i = 1, \ldots, N$} 
    \Comment{Compute $\image^{\vfield , \intenfield } (t_i,\Cdot) := \imagetemp^{\vfield ,\intenfield }(t_i,\Cdot) \circ \diffeoflow{\vfield }{t_i, 0}$}
    \State 
      $\imagetemp^{\vfield ,\intenfield }(t_i,\Cdot) 
        \gets I_0 + \sum_{j=0}^{i-1} \imagetemp^{\vfield ,\intenfield }(t_{j},\Cdot) 
          + \frac1N \intenfield  (t_{j},\Cdot) \circ \diffeoflow{\vfield }{0, t_{j}}$ 
    \State 
      $\imagetemp^{\vfield ,\intenfield } (t_i,\Cdot) \circ \diffeoflow{\vfield }{0, 0} 
        \gets \imagetemp^{\vfield ,\intenfield } (t_i,\Cdot)$
    \For{$j = 1, \ldots, i$}
     \State
       $\imagetemp^{\vfield ,\intenfield } (t_i,\Cdot) \circ \diffeoflow{\vfield }{ t_{j}, 0 } 
         \gets \bigl( \imagetemp^{\vfield ,\intenfield }(t_i,\Cdot) \circ \diffeoflow{\vfield }{0, t_{j-1}} \bigr) \circ \Bigl(\Id  - \frac{1}{N}\vfield (t_{j-1},\Cdot)\Bigr)$
    \EndFor
  \EndFor
  \For{$i = 1, \ldots, N$} 
    \Comment{Compute $\template \circ \diffeoflow{\vfield }{t_i,0}$}
    \State 
      $\template \circ \diffeoflow{\vfield }{0,0} 
        \gets \template \circ \diffeoflow{\vfield }{0,0} = \template$
    \State 
      $\template \circ \diffeoflow{\vfield }{t_i,0} 
        \gets \bigl(\template \circ \diffeoflow{\vfield }{t_{i-1},0}\bigr) \circ \Bigl(\Id - \frac{1}{N}\vfield (t_{i-1},\Cdot)\Bigr)$
  \EndFor
  \For{$i = 1, \ldots, N$}
    \State 
      $G(t_i, \Cdot) 
        \gets  \nabla(\template \circ \diffeoflow{\vfield}{t_i,0})   + 
          \sum_{j=0}^{t_{i-1}} \dfrac1N  \nabla(\zeta( t_j, \cdot ) \circ \diffeoflow{\vfield}{t_i, t_j} )  $
  \EndFor
  \State  $\bigl\vert \D\diffeoflow{\vfield }{t_{N},1}\bigr\vert = \bigl\vert \D\diffeoflow{\vfield }{1,1}\bigr\vert = 1$ 
  \Comment{Compute $\bigl\vert \D\diffeoflow{\vfield }{t_i,1}\bigr\vert$}
  \For{$i = N-1, \ldots, 0$}
    \State 
      $\bigl\vert \D\diffeoflow{\vfield }{t_i,1}\bigr\vert 
        \gets \Bigl(1 + \frac{1}{N} \Div\vfield (t_i,\Cdot)\Bigr)\bigl\vert \D\diffeoflow{\vfield }{t_{i+1},1}\bigr\vert \circ \Bigl(\Id  + \frac{1}{N} \vfield (t_i,\Cdot)\Bigr)$
  \EndFor
  \State 
    $\nabla\DataDiscr\bigl( \image^{\vfield,\intenfield}(1, \cdot), \data \bigr)\bigl(\diffeoflow{\vfield }{t_N, 1}\bigr)
      \gets \nabla\DataDiscr\bigl( \image^{\vfield,\intenfield}(1, \cdot), \data \bigr)$ 
  \For{$i = N-1, \ldots, 0$}
    \Comment{Compute $ \nabla\DataDiscr\bigl( \image^{\vfield,\intenfield}(1, \cdot), \data \bigr)\bigl(\diffeoflow{\vfield }{t_i, 1}\bigr)$}
    \State 
      $\nabla\DataDiscr\bigl( \image^{\vfield,\intenfield}(1, \cdot), \data \bigr)\bigl(\diffeoflow{\vfield }{t_i, 1}\bigr) 
        \gets  \nabla\DataDiscr\bigl( \image^{\vfield,\intenfield}(1, \cdot), \data \bigr)\bigl(\diffeoflow{\vfield }{t_{i+1}, 1}\bigr) 
          \circ \Bigl(\Id  + \frac{1}{N} \vfield (t_i,\Cdot)\Bigr)$
  \EndFor
  \For{$i = 1, \ldots, N$} 
    \Comment{Compute $\nabla \ObjFunc (\vfield , \intenfield )$} 
    \State \begin{multline*}
        {\nabla_{\vfield}} \ObjFunc_{\gamma, \tau} (\vfield , \intenfield , \data)(t_i,,\Cdot) 
        \gets  2 \gamma \vfield (t_i,,\Cdot)  \\
         -  \int_{\domain}  K(x, \cdot ) 
        \Bigl\vert \Det(\der{ \diffeoflow{\vfield}{t_i, 1}}(x)) \Bigr\vert   
        \nabla\DataDiscr\bigl( \image^{\vfield,\intenfield}(1, \cdot), \data \bigr)\bigl(\diffeoflow{\vfield }{t_i, 1}(x)\bigr) G(t_i, x) \der x
      \end{multline*}
    \State
      \begin{multline*}
        \nabla_\intenfield \ObjFunc_{\gamma, \tau} (\vfield , \intenfield )(t_i,,\Cdot) 
        \gets  2 \tau \intenfield (t_i,,\Cdot) \\
           + \vert \Det(\der{ \diffeoflow{\vfield}{t_i, 1}}) \Bigr\vert  \nabla\DataDiscr\bigl( \image^{\vfield,\intenfield}(1, \cdot), \data \bigr)\bigl(\diffeoflow{\vfield }{t_i, 1}(x)\bigr) G(t_i, \Cdot) 
      \end{multline*}
        \EndFor
\State \Return $\nabla \ObjFunc (\vfield )(t_i,\Cdot)$, $\nabla \ObjFunc (\intenfield )(t_i,\Cdot)$ for $i = 1, \ldots, N$.
\end{algorithmic}
}
\end{algorithm}

\section{Application to 2D tomography}\label{sec:TomoExample}
\subsection{The setting}\label{sec:TomoInvProb}
\paragraph{The forward operator}
Let $\RecSpace=L^{2}(\domain,\Real)$ whose elements represent 2D images on a fixed bounded domain $\domain \subset \Real^2$. 
In the application shown here, diffeomorphisms act on $\RecSpace$ through a geometric group action in \cref{eq:GeometricGroupAction} 
and the goal is to register a given differentiable template image $\template \in \RecSpace$ against a target that observed indirectly as in \cref{eq:InvProb}. 

The forward operator $\ForwardOp \colon \RecSpace \to \DataSpace$ in 2D tomographic imaging is the 2D ray/Radon transform, i.e.,
\[ \ForwardOp(\image)(\omega,x) = \int_{\Real} \image(x+s \omega) \der s
    \quad\text{for $\omega \in S^1$ and $x \in \omega^{\bot}$.}
\]  
Here, $S^1$ is the unit circle, so $(\omega,x)$ encodes the line $s \mapsto x + s \omega$ in $\Real^2$ with direction $\omega$ through $x$. 
The data manifold $\datadomain$ is the set of such lines that are included in the measurements, i.e., $\datadomain$ is given by the experimental set-up.
We will consider parallel lines in $\Real^2$ (parallel beam data), i.e., tomographic data are noisy digitized values of an $L^2$-function on this manifold so $\data \in \DataSpace = L^{2}(\datadomain,\Real)$. The forward operator is linear, so it is particular Gateaux differentiable, and the adjoint of its derivative is given by the backprojection, see \cite{NaWu01,Markoe:2006aa} for further details.

If data is corrupted by additive Gaussian noise, so a suitable data likelihood is the 2-norm, i.e., 
\[
  \DataDiscr \colon \DataSpace \times \DataSpace \to \Real
  \quad\text{with}\quad
  \DataDiscr(\data,\dataother) = \Vert \data - \dataother \Vert_{2}^2.
\]
The noise level in data is specified by the \ac{PSNR}, which is defined as 
\[ \SNR(\data) = 10 \log_{10} \biggl( \frac{\Vert \data_0 - \overline{\data_0} \Vert^2}{ \Vert \datanoise - \overline{\datanoise}  \Vert^2} \biggr) 
   \quad\text{for $\data = \data_0 + \datanoise$.}
\]
In the above, $\data_0$ is the noise-free part and $\datanoise$ is the noise component of data with $\overline{\data_0}$ and $\overline{\datanoise}$ denoting the mean of $\data_0$ and $\datanoise$, respectively. The \ac{PSNR} is expressed in terms of {\decibel}.

\paragraph{Joint tomographic reconstruction and registration}
Under the geometric group action \cref{eq:GeometricGroupAction}, metamorphosis based-indirect registration reads as 
 \[
 \image^{\vfieldopt,\intenfieldopt}_1 = \DeforOp\bigl(\diffeoflow{\vfieldopt}{0,1}, \templateflow{\vfieldopt,\intenfieldopt}{1}\bigr)
   = \templateflow{\vfieldopt,\intenfieldopt}{1} \circ \diffeoflow{\vfieldopt}{1,0}
 \]
where $(\vfieldopt,\intenfieldopt) \in \VelocitySpace{2}{V \times \RecSpace}$ minimizes \cref{Eq:ObjectiveFunctional}, i.e., 
given regularization parameters $\gamma, \tau \geq 0$ and initial template $\template \in \RecSpace$ we solve 
\begin{equation}\label{eq:OptimIndirRegTomo}
\begin{split}
 \displaystyle{\min_{(\vfield, \intenfield)}} &
\biggl[ \frac{\gamma}{2} \Vert \vfield \Vert_2^2 + \frac{\tau}{2} \Vert \intenfield \Vert_2^2 
        + \Bigl\Vert \ForwardOp\Bigl( \image\bigl(1,\diffeo(1,\Cdot)^{-1} \bigr)\Bigr) - \data \Bigr\Vert_2^2
\biggr] 
\\[1.5em]
& 
\begin{cases}       
\dfrac{\der}{\der t} \image(t,x) = \intenfield\bigl(t, \diffeo(t,x)\bigr) & \\[0.75em]
\image(0,x) = \template(x) & \\[1em]
\displaystyle{\frac{d}{dt}} \diffeo(t,x) = \vfield\bigl( t, \diffeo(t,x) \bigr)  & \\[0.5em] 
\diffeo(0,x)= x. &
\end{cases}  
\end{split}
\end{equation}
We will consider a set $V$ of vector fields that is an \ac{RKHS} with a reproducing kernel represented by symmetric and positive definite 
Gaussian. Then $V$ is admissible and is continuously embedded in $L^2(\domain,\Real^2)$. The kernel we pick is 
$K_\sigma \colon \domain \times \domain \to \Real_{+}^{2 \times 2}$ 
\begin{equation}\label{eq:KernelEq}
  K_\sigma(x,y) := 
      \exp\Bigl(-\dfrac{1}{2 \sigma^2} \Vert x-y \Vert_2 \Bigr)
      \begin{pmatrix} 
          1  & 0 \\
          0  & 1
      \end{pmatrix} 
\quad\text{for $x,y \in \Real^2$ and $\sigma >0$.}
\end{equation}
The kernel-size $\sigma$ also acts as a regularization parameter.

\subsection{Overview of experiments}
In the following we perform a number of experiments that tests various aspects of using metamorphoses based indirect registration for joint tomographic reconstruction and registration. 
The tomographic inverse problem along with characteristics of the data are outlined in  \cref{sec:TomoInvProb}. 
The results are obtained by solving \cref{eq:OptimIndirRegTomo} via a gradient descent, see \cref{Appendix:GradientComputation} for the computation of 
the gradient of the objective. 
For each reconstruction, we list the the number of angles of the parallel beam ray transform, the kernel-size $\sigma$ in \cref{eq:KernelEq}, and the two regularisation parameters $\gamma, \tau >0$ appearing in the objective functional in \cref{eq:OptimIndirRegTomo}.

The first test (\cref{Sec:Test:BadIntensity}) aims to show how metamorphoses based indirect registration handles a template that has intensities differing from those of the target. 
\Cref{Sec:SheppLogan} considers the ability to handle an initial template with a topology that does not match the target. 
This is essential when one has simultaneous geometric and topological changes. 
As an example, in spatiotemporal imaging it may very well be the case that geometric deformation takes place simultaneously as new masses appear or disappear. 
Next, in \cref{Sec:Test:Robustness} studies the robustness of the solutions with respect to variations in the regularization parameters.
Finally, \cref{Sec:Test:SpatioTemporal} shows how indirect registration through metamorphoses can be used to recover a temporal evolution of a given template registered against time series of data. This is an essential part of spatio-temporal tomographic reconstruction. 

\Cref{Sec:Test:BadIntensity,Sec:SheppLogan,Sec:Test:Robustness} have a common setting in that grey scale images in the reconstruction space are discretised using $256 \times 256$ pixels in the image domain $\Omega = [-16, 16] \times [-16, 16]$. The tomographic data is noisy samples of the 2D parallel beam ray transform of the target sampled at 100 angles uniformly distributed angles in $[0,\pi]$ with $362$ lines/angle. Data is corrupted with additive Gaussian noise with differing noise levels.

\subsection{Consistent topology and inconsistent intensities}\label{Sec:Test:BadIntensity}
Here, topology of the template is consistent with that of the target, but intensities differ. The template, which is shown in \cref{Fig:Metamorphosis:Triangles:Template}, is registered against tomographic data shown in \cref{Fig:Metamorphosis:Triangles:Data}.
The (unknown) target used to generate data is shown in \cref{Fig:Metamorphosis:Triangles:Target}. 
Also, data has a noise level corresponding to a \ac{PSNR} of $\unit{15.6}{\decibel}$ and kernel size is $\sigma = 2$, which should be compared to the size of the image domain $\Omega = [-16, 16] \times [-16, 16]$.
The final reconstruction is shown in \cref{Fig:Metamorphosis:Triangles:Reconstruction}, which is to be compared against the target in
\cref{Fig:Metamorphosis:Triangles:Target}. \Cref{Fig:Metamorphosis:Triangles} also shows image, deformation and template trajectories.

We clearly see that metamorphosis based indirect registration can handle a template with wrong intensities. As a comparison, see \cref{Fig:LDDMM:BadIntensities:Reco} for the corresponding \ac{LDDMM} based indirect registration using the same template and data. Furthermore, the different trajectories also provides easy visual interpretation of the influence of geometric and intensity deformations. 

\begin{figure}[t]
 \centering
  \subfloat[Template.\label{Fig:Metamorphosis:Triangles:Template}]{\includegraphics[width=0.28\textwidth]{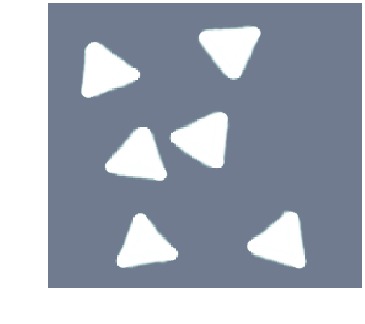}}%
  \hfil
  \subfloat[Target.\label{Fig:Metamorphosis:Triangles:Target}]{\includegraphics[width=0.28\textwidth]{test1ground_truth_values__0_2__0_9}}%
  \hfil
  \subfloat[Data (sinogram).\label{Fig:Metamorphosis:Triangles:Data}]{\includegraphics[width=0.28\textwidth]{test1ground_truth_values__0__1sinogram.jpeg}}%
\par
 \subfloat[$t = 0$.\label{Fig:Metamorphosis:Triangles:ImageStart}]{\rotatebox{90}{\hspace{0.7cm}\footnotesize{Image}\hspace{-0cm}}%
 \includegraphics[width=0.19\textwidth]{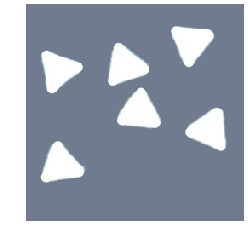}}%
  \hfil
  \subfloat[$t = 0.2$.]{\includegraphics[width=0.19\textwidth]{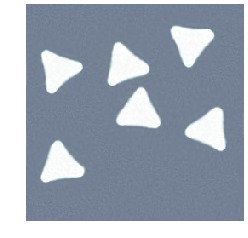}}%
  \hfil
  \subfloat[$t = 0.5$.]{\includegraphics[width=0.19\textwidth]{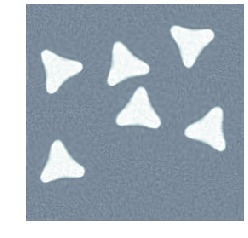}}%
  \hfil
  \subfloat[$t = 0.7$.]{\includegraphics[width=0.19\textwidth]{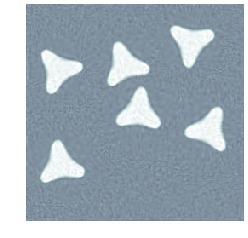}}%
  \hfil
  \subfloat[$t = 1$.\label{Fig:Metamorphosis:Triangles:Reconstruction}]{\includegraphics[width=0.19\textwidth]{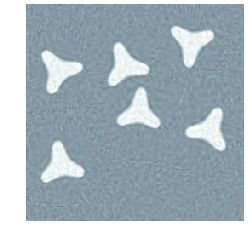}}%
\par
  \subfloat[$t = 0$.\label{Fig:Metamorphosis:Triangles:DeforStart}]{\rotatebox{90}{\hspace{0.25cm}\footnotesize{Deformation}\hspace{-0cm}}%
  \includegraphics[width=0.19\textwidth]{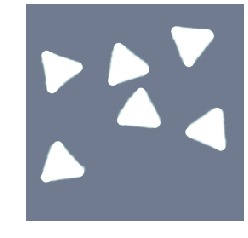}}%
  \hfil
  \subfloat[$t = 0.2$.]{\includegraphics[width=0.19\textwidth]{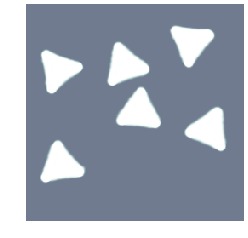}}%
  \hfil
  \subfloat[$t = 0.5$.]{\includegraphics[width=0.19\textwidth]{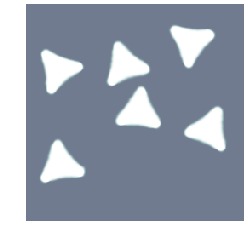}}%
  \hfil
  \subfloat[$t = 0.7$.]{\includegraphics[width=0.19\textwidth]{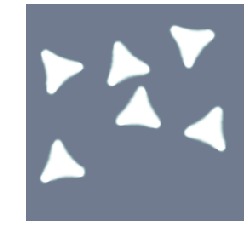}}%
  \hfil
  \subfloat[$t = 1$.\label{Fig:Metamorphosis:Triangles:DeforEnd}]{\includegraphics[width=0.19\textwidth]{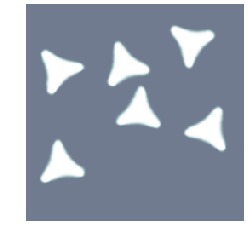}}%
\par
  \subfloat[$t = 0$.\label{Fig:Metamorphosis:Triangles:TemplateStart}]{\rotatebox{90}{\hspace{0.5cm}\footnotesize{Intensity}\hspace{-0cm}}%
  \includegraphics[width=0.19\textwidth]{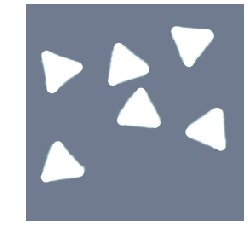}}%
  \hfil
  \subfloat[$t = 0.2$.]{\includegraphics[width=0.19\textwidth]{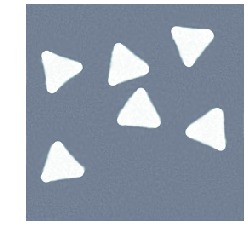}}%
  \hfil
  \subfloat[$t = 0.5$.]{\includegraphics[width=0.19\textwidth]{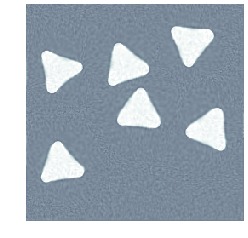}}%
  \hfil
  \subfloat[$t = 0.7$.]{\includegraphics[width=0.19\textwidth]{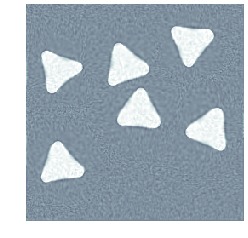}}%
  \hfil
  \subfloat[$t = 1$.\label{Fig:Metamorphosis:Triangles:TemplateEnd}]{\includegraphics[width=0.19\textwidth]{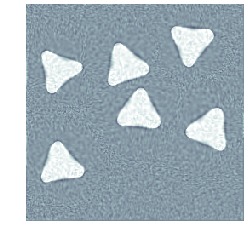}}%
  \caption{\label{Fig:Metamorphosis:Triangles}  
  Metamorphosis based indirect-matching of template in \captionsubref{Fig:Metamorphosis:Triangles:Template} against data in 
  \captionsubref{Fig:Metamorphosis:Triangles:Data}, which represents 2D ray transform of target in \captionsubref{Fig:Metamorphosis:Triangles:Target}
(100 uniformly distributed angles in $[0,\pi]$). 
  The second row \captionsubref{Fig:Metamorphosis:Triangles:ImageStart}--\captionsubref{Fig:Metamorphosis:Triangles:Reconstruction} shows the image trajectory $t \mapsto \DeforOp(\diffeoflow{\vfield}{0,t}, \image_t(\vfield,\intenfield))$, so the final registered template is in \captionsubref{Fig:Metamorphosis:Triangles:Reconstruction}.
  The third row \captionsubref{Fig:Metamorphosis:Triangles:DeforStart}--\captionsubref{Fig:Metamorphosis:Triangles:DeforEnd} shows the deformation trajectory $t \mapsto \DeforOp(\diffeoflow{\vfield}{0,t}, \template)$, likewise the fourth row \captionsubref{Fig:Metamorphosis:Triangles:TemplateStart}--\captionsubref{Fig:Metamorphosis:Triangles:TemplateEnd} shows the intensity trajectory $t \mapsto \image_t(\vfield,\intenfield)$.} 
\end{figure}

\subsection{Inconsistent topology and intensities}\label{Sec:SheppLogan}
Here, both topology and intensities of the template differ from those in the target.
The template, which is shown in \cref{Fig:Metamorphosis:SheppLogan:Template}, is registered against tomographic data shown in \cref{Fig:Metamorphosis:SheppLogan:Data}. 
The (unknown) target used for generating the data is shown in \cref{Fig:Metamorphosis:SheppLogan:Target}. 
Also, data has a noise level corresponding to a \ac{PSNR} of $\unit{10.6}{\decibel}$ and kernel size is $\sigma = 2$, which should be compared to the size of the image domain $\Omega = [-16, 16] \times [-16, 16]$.
The final reconstruction is shown in \cref{Fig:Metamorphosis:SheppLogan:Reconstruction}, which is to be compared against the target in
\cref{Fig:Metamorphosis:SheppLogan:Target}. \Cref{Fig:Metamorphosis:SheppLogan} also shows image, deformation and template trajectories.

We clearly see that metamorphosis based indirect registration can handle a template where both intensities and the topology are wrong. 
In particular, we can see follow both the deformation of the template and the appearance of the white disc.

\begin{figure}
\centering
  \subfloat[Template $\template$.\label{Fig:Metamorphosis:SheppLogan:Template}]{\includegraphics[width=0.28\textwidth]{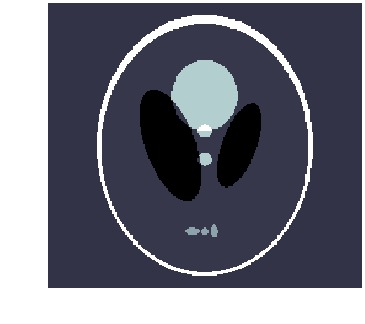}}%
  \hfil
  \subfloat[Target.\label{Fig:Metamorphosis:SheppLogan:Target}]{\includegraphics[width=0.28\textwidth]{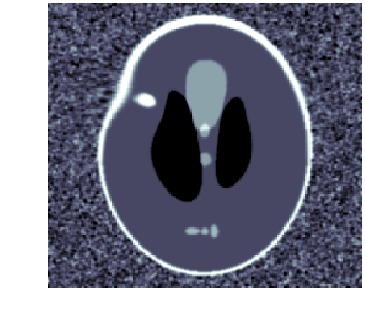}}%
  \hfil
  \subfloat[Data (sinogram).\label{Fig:Metamorphosis:SheppLogan:Data}]{\includegraphics[width=0.28\textwidth]{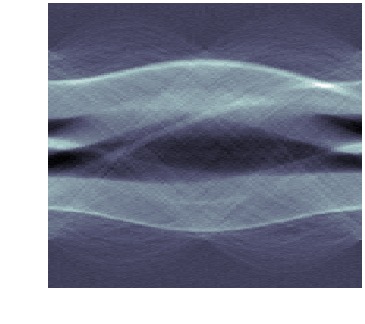}}\par
  \subfloat[$t = 0$.\label{Fig:Metamorphosis:SheppLogan:ImageStart}]{\rotatebox{90}{\hspace{0.7cm}\footnotesize{Image}\hspace{-0cm}}%
  \includegraphics[width=0.19\textwidth]{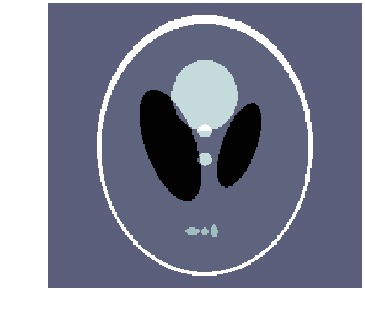}}%
  \hfil
  \subfloat[$t = 0.2$.]{\includegraphics[width=0.19\textwidth]{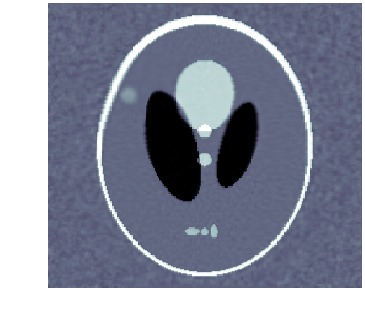}}%
  \hfil
  \subfloat[$t = 0.5$.]{\includegraphics[width=0.19\textwidth]{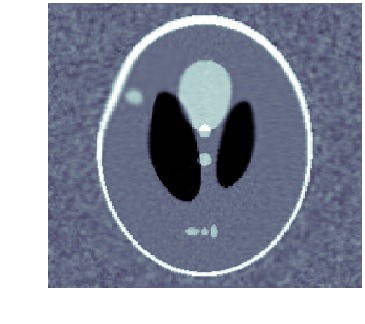}}%
  \hfil
  \subfloat[$t = 0.7$.]{\includegraphics[width=0.19\textwidth]{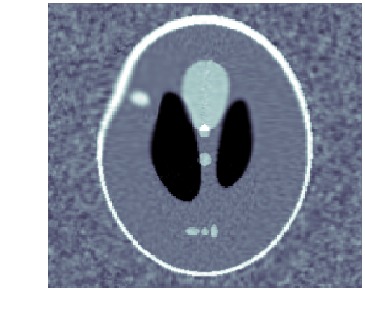}}%
  \hfil
  \subfloat[$t = 1$.\label{Fig:Metamorphosis:SheppLogan:Reconstruction}]{\includegraphics[width=0.19\textwidth]{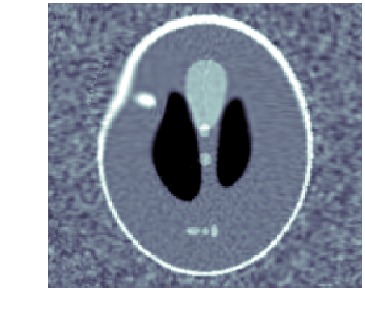}}%
\par
    \subfloat[$t = 0$.\label{Fig:Metamorphosis:SheppLogan:DeforStart}]{\rotatebox{90}{\hspace{0.25cm}\footnotesize{Deformation}\hspace{-0cm}}%
    \includegraphics[width=0.19\textwidth]{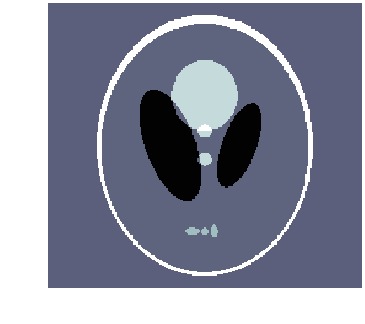}}%
  \hfil
  \subfloat[$t = 0.2$.]{\includegraphics[width=0.19\textwidth]{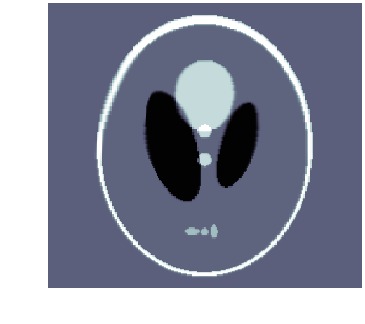}}%
  \hfil
  \subfloat[$t = 0.5$.]{\includegraphics[width=0.19\textwidth]{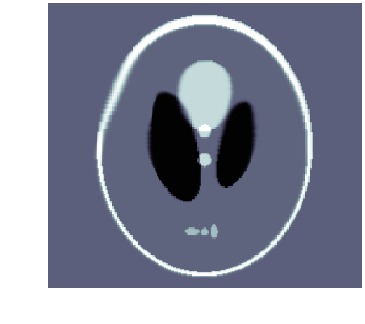}}%
  \hfil
  \subfloat[$t = 0.7$.]{\includegraphics[width=0.19\textwidth]{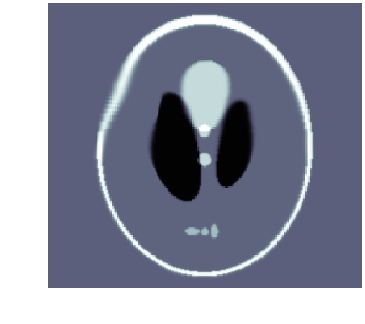}}%
  \hfil
  \subfloat[$t = 1$.\label{Fig:Metamorphosis:SheppLogan:DeforEnd}]{\includegraphics[width=0.19\textwidth]{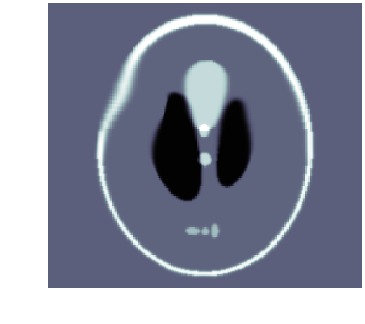}}%
\par
   \subfloat[$t = 0$.\label{Fig:Metamorphosis:SheppLogan:TemplateStart}]{\rotatebox{90}{\hspace{0.5cm}\footnotesize{Intensity}\hspace{-0cm}}%
   \includegraphics[width=0.19\textwidth]{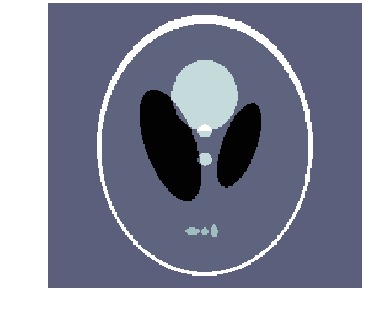}}%
  \hfil
  \subfloat[$t = 0.2$.]{\includegraphics[width=0.19\textwidth]{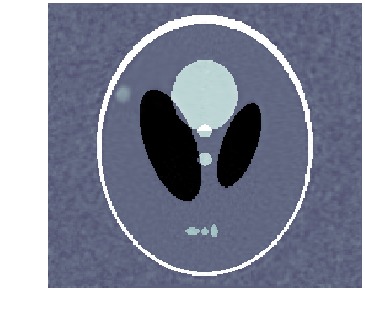}}%
  \hfil
  \subfloat[$t = 0.5$.]{\includegraphics[width=0.19\textwidth]{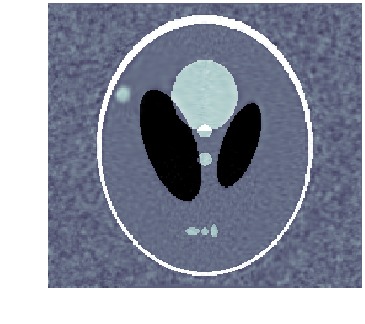}}%
  \hfil
  \subfloat[$t = 0.7$.]{\includegraphics[width=0.19\textwidth]{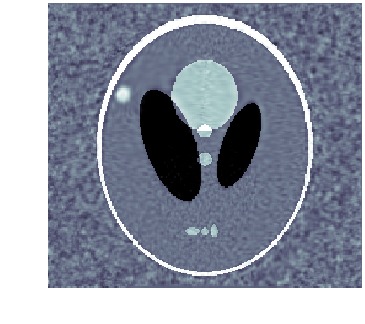}}%
  \hfil
  \subfloat[$t = 1$.\label{Fig:Metamorphosis:SheppLogan:TemplateEnd}]{\includegraphics[width=0.19\textwidth]{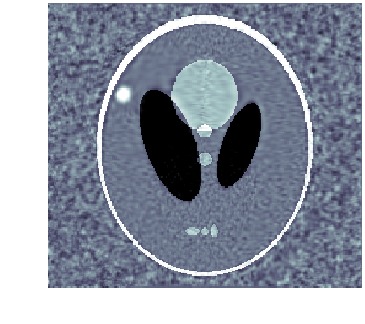}}%
 \caption{\label{Fig:Metamorphosis:SheppLogan}
   Metamorphosis based indirect-matching of template in \captionsubref{Fig:Metamorphosis:Triangles:Template} against data in 
  \captionsubref{Fig:Metamorphosis:SheppLogan:Data}, which represents 2D ray transform of target in \captionsubref{Fig:Metamorphosis:SheppLogan:Target}
(100 uniformly distributed angles in $[0,\pi]$). 
  The second row \captionsubref{Fig:Metamorphosis:SheppLogan:ImageStart}--\captionsubref{Fig:Metamorphosis:SheppLogan:Reconstruction} shows the image trajectory $t \mapsto \DeforOp(\diffeoflow{\vfield}{0,t}, \image_t(\vfield,\intenfield))$, so the final registered template is in \captionsubref{Fig:Metamorphosis:SheppLogan:Reconstruction}.
  The third row \captionsubref{Fig:Metamorphosis:SheppLogan:DeforStart}--\captionsubref{Fig:Metamorphosis:SheppLogan:DeforEnd} shows the deformation trajectory $t \mapsto \DeforOp(\diffeoflow{\vfield}{0,t}, \template)$, likewise the fourth row \captionsubref{Fig:Metamorphosis:SheppLogan:TemplateStart}--\captionsubref{Fig:Metamorphosis:SheppLogan:TemplateEnd} shows the intensity trajectory $t \mapsto \image_t(\vfield,\intenfield)$.
  }%
\end{figure}

\subsection{Robustness}\label{Sec:Test:Robustness}
Metamorphosis based indirect registration, which amounts to solving \cref{eq:OptimIndirRegTomo}, requires selecting three parameters: the kernel-size $\sigma$ and the two regularisation parameters $\gamma$ and $\tau$. Here we study the influence of these parameters on the final registered image (reconstruction) based on the setup in \cref{Sec:SheppLogan}.

The reconstruction along with its template and deformation parts are not that sensitive to the specific choice the two regularisation parameters $\gamma$ and $\tau$, see \cref{Table:FOM:RegParam} that shows the \ac{SSIM} and \ac{PSNR} values for various values of $\gamma$ and $\tau$ when $\sigma=3$.
The reconstruction is on the other hand more sensitive to the choice of the kernel size, see \cref{Table:FOM:Sigma} for a table of \ac{SSIM} and \ac{PSNR} values corresponding to different choices of kernel size. \Cref{Fig:VariousSigma} also shows reconstructed image with the corresponding final template and deformation parts for various values of $\sigma$.
Interestingly, even if the reconstruction is satisfying for the various values of the kernel size $\sigma$, its template part and deformation parts are really different. The geometric deformation and the change in intensity values seem to balance in an non-intuitive way in order
to produce a reasonable final image. 

\begin{table}
\centering
\begin{tabular}{c | r r r r}
\diagbox{$\tau$}{$\gamma$}  &  \multicolumn{1}{c}{$10^{-7}$} &  \multicolumn{1}{c}{$10^{-5}$} & \multicolumn{1}{c}{$10^{-3}$} 
 &  \multicolumn{1}{c}{$10^{-1}$}   \\ 
\toprule
\multirow{2}{*}{$10^{-1}$}     & 0.767 & 0.768& 0.768 &  0.768 \\
   & -6.37 & -6.43 & -6.422 &  -6.42\\
  \hline
\multirow{2}{*}{$10^{-3}$}   &  0.766  &  0.770 & 0.770&  0.770 \\
  & -6.33 & -6.36  &  -6.36 &  -6.36 \\
  \hline
\multirow{2}{*}{$10^{-5}$}   & 0.766& 0.770  &0.770   & 0.770 \\
  & -6.33 & -6.35  &  -6.36 &  -6.36 \\
  \hline
\multirow{2}{*}{$10^{-7}$ }  &  0.766 &0.770 & 0.770   & 0.770  \\
  &-6.33  & -6.35  & -6.36  & -6.36  \\
\bottomrule
\end{tabular}
\caption{\Ac{SSIM} (top) and \ac{PSNR} (bottom) values for metamorphosis based indirect registration with varying regularisation parameter and $\sigma = 3$ for several regularisation parameters.  \label{Table:FOM:RegParam}}
\end{table}

\begin{table}
\centering
\begin{tabular}{r |  r r r r r r r}
 $\sigma$ & $0.3$  & $0.6$ & $1$ & $2$ & $3$ & $5$ & $10$ \\
\toprule
  \ac{SSIM} & 0.660 & 0.703 & 0.737 & 0.769 & 0.766 & 0.764 & 0.682 \\
  \ac{PSNR} & -7.75 & -7.03 & -6.57 & -6.36 & -6.49 & -6.66 & -8.98 \\
\end{tabular}
\caption{\ac{SSIM} and \ac{PSNR} values for metamorphosis based indirect registration with varying kernel size $\sigma$ and fixed regularisation parameters $\gamma = \tau = 10^{-5}$.  \label{Table:FOM:Sigma}}
\end{table}

\begin{figure}
\vskip-6\baselineskip
\begin{minipage}{0.03\textwidth}

\end{minipage}
\begin{minipage}{0.3\textwidth}
\centering 
Image
\end{minipage}
\begin{minipage}{0.3\textwidth}
\centering 
Deformation 
\end{minipage}
\begin{minipage}{0.3\textwidth}
\centering 
Template 
\end{minipage}

\begin{minipage}{0.03\textwidth}
\rotatebox{90}{\hspace{2cm}$\sigma = 0.3$\hspace{0cm}}
\par
\rotatebox{90}{\hspace{2cm}$\sigma = 0.6$}
\par
\rotatebox{90}{\hspace{2cm}$\sigma = 1$}
\par
\rotatebox{90}{\hspace{2cm}$\sigma = 2$}
\par
\rotatebox{90}{\hspace{2cm}$\sigma = 3$}
\par
\rotatebox{90}{\hspace{2cm}$\sigma = 5$}
\par
\rotatebox{90}{\hspace{0.5cm}$\sigma = 10$}
\end{minipage}
\begin{minipage}{0.3\textwidth}
\centering 
\includegraphics[height=3cm]{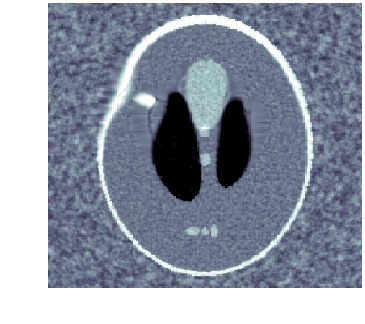}

\par
 \includegraphics[height=3cm]{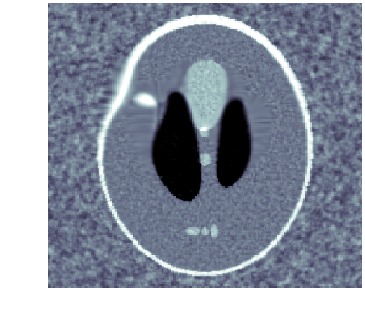}

\par
\includegraphics[height=3cm]{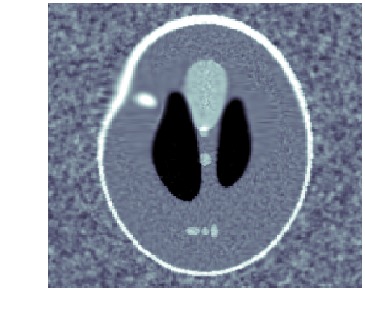}

\par
\includegraphics[height=3cm]{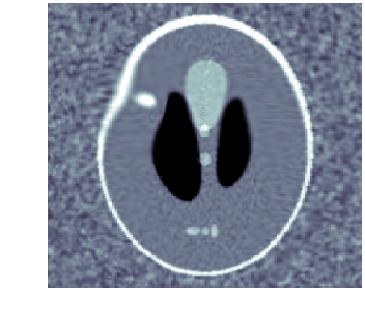}

\par
\includegraphics[height=3cm]{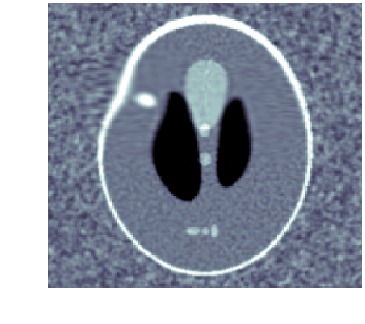}

\par
\includegraphics[height=3cm]{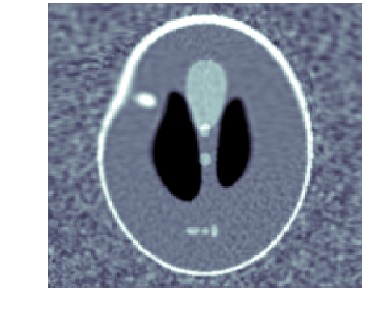}

\par
\includegraphics[height=3cm]{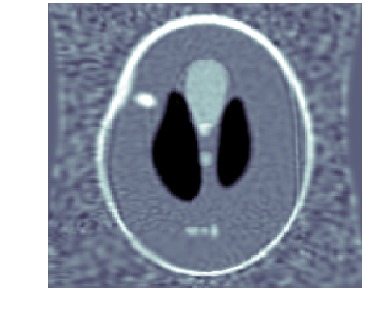}

\end{minipage}
\begin{minipage}{0.3\textwidth}
\centering 
\includegraphics[height=3cm]{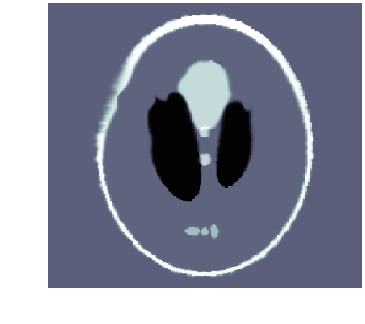}

\par
 \includegraphics[height=3cm]{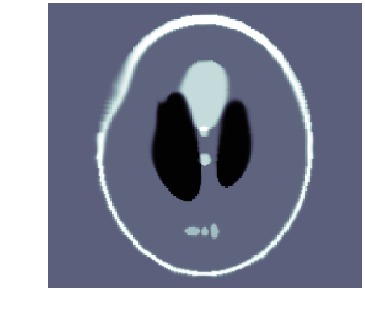}

\par
\includegraphics[height=3cm]{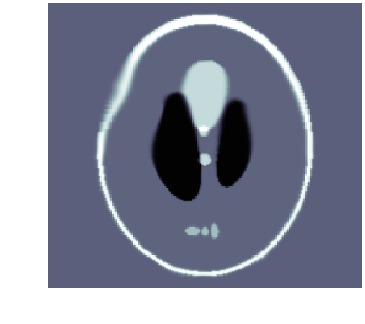}

\par
\includegraphics[height=3cm]{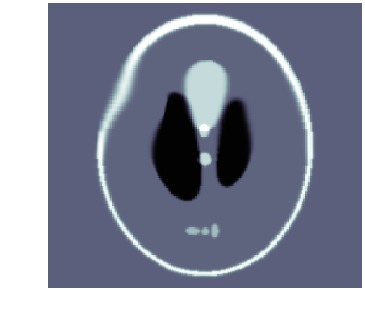}

\par
\includegraphics[height=3cm]{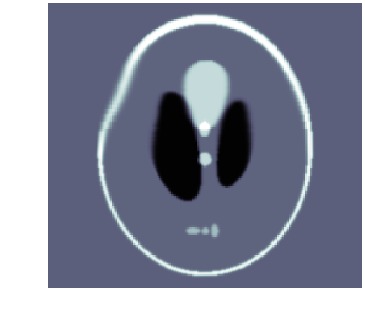}

\par
\includegraphics[height=3cm]{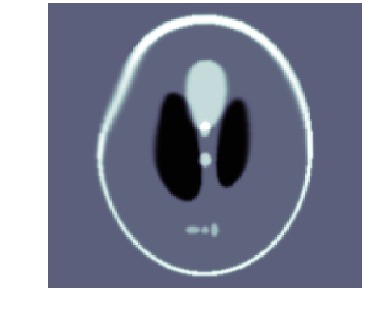}

\par
\includegraphics[height=3cm]{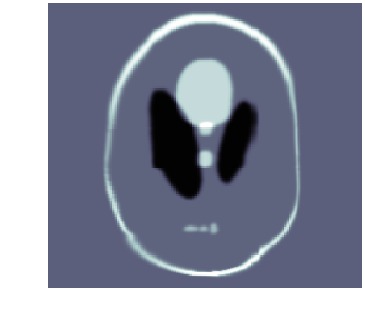}

\end{minipage}
\begin{minipage}{0.3\textwidth}
\centering 
\includegraphics[height=3cm]{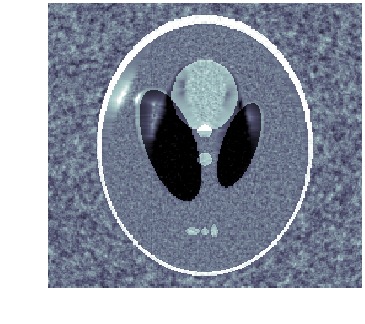}

\par
 \includegraphics[height=3cm]{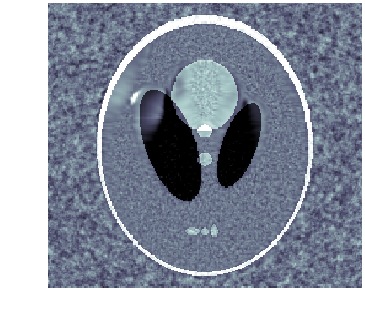}

\par
\includegraphics[height=3cm]{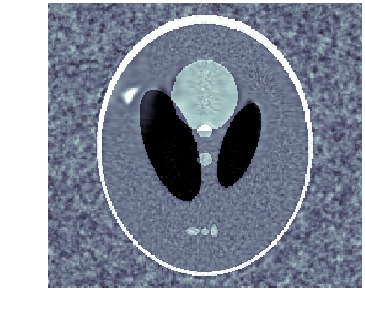}

\par
\includegraphics[height=3cm]{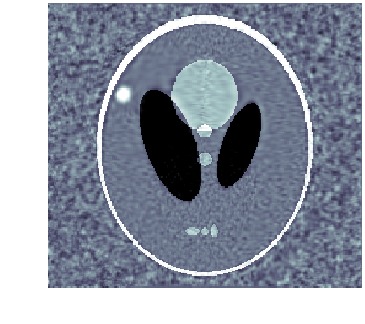}

\par
\includegraphics[height=3cm]{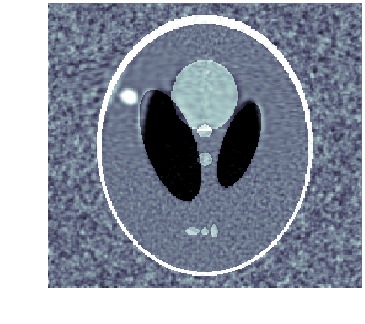}

\par
\includegraphics[height=3cm]{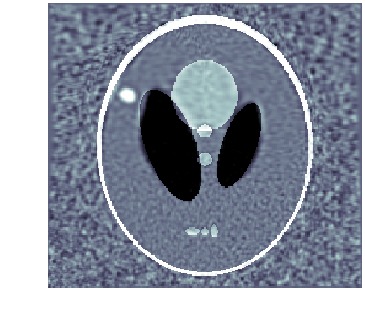}

\par
\includegraphics[height=3cm]{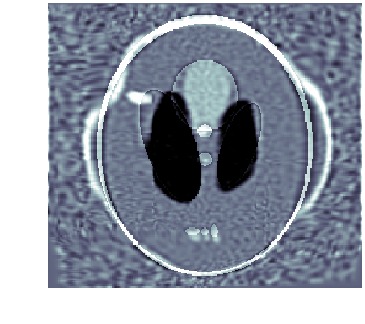}

\end{minipage}
\caption{Reconstruction results and their recomposition in template part and deformation part for various kernel size $\sigma$. \label{Fig:VariousSigma}}
\end{figure}

\subsection{Spatio-temporal reconstruction}\label{Sec:Test:SpatioTemporal}
The goal here is to recover the unknown temporal evolution of a template matched against (gated) parallel beam 2D ray transform data acquired at 10 different time points (from $t=0.1$ to $t=1$), so the target undergoes a temporal evolution.  
At each of the 10 time points, we only have limited tomographic data in the sense that $i$:th acquisition corresponds to sampling the parallel beam ray transform of the target at time $t_i$ using 10~angles randomly distributed in $[(i-1) \pi / 10, i \pi / 10]$ using $362$ lines/angles. 
Similarly to previous experiments, the reconstruction space is $\Omega = [-16, 16] \times [-16, 16]$, discretised as $256 \times 256$ pixel grey scale images.

The registration of the template $\template$ against the temporal series of data $\data_i$, $1 \leq i \leq 10$ at the $10$ time points $t_i$ is performed by minimizing the following functional with respect to \emph{one} trajectory $(\vfield,\intenfield) \in \VelocitySpace{2}{V \times \RecSpace}$:
\begin{equation*}
 \ObjFunc_{\gamma, \tau}(\vfield, \intenfield; \data_1, \dots, \data_{10}) 
   := \frac{\gamma}{2} \Vert \vfield \Vert_2^2 + \frac{\tau}{2} \Vert \intenfield \Vert_2^2 
        + \sum_{i=1}^{10} \DataDiscr\Bigl(\ForwardOp\bigl(\DeforOp(\diffeoflow{\vfield}{0,t_i},\templateflow{\vfield,\intenfield}{t_i})\bigr),\data_i \Bigr)
\end{equation*}
where $t \mapsto \templateflow{\vfield,\intenfield}{t}$, is the absolutely continuous solution to 
\begin{equation*}
\begin{cases}
  \dfrac{\der}{\der t} \templateflow{\vfield,\intenfield}{t}(x) = \intenfield\bigl(t, \diffeoflow{\vfield}{0,t}(x)\bigr) & \\[0.75em]
  \templateflow{\vfield,\intenfield}{0}(x) = \template(x)
\end{cases}
\quad\text{with $\diffeoflow{\vfield}{0,t} \in G_V$ as in \cref{eq.FlowDiffeo}.}
\end{equation*}

The target, the gated tomographic data, and the three trajectories (image, deformation and template) resulting from the metamorphosis based indirect registration are shown in \cref{Fig:TemporalExample}.
We see that metamorphosis based indirect registration can be used for spatio-temporal reconstruction even when (gated) data is highly under sampled. 
In particular, we can recover the evolution (both the geometric deformation and the appearance of the white disc) of the target. As a comparison, 
\cref{subfig:FBPTV} presents reconstructions obtained from \ac{FBP} and \ac{TV}. Here, data is a concatenation of the 10 gated data sets, thereby corresponding then sampling the ray transform using 100 angles in $[0, \pi]$. Note however that the temporal evolution of the target is not accounted for in these reconstructions.

\begin{figure}[h]
  \centering
  \subfloat[The temporal evolution of the target.\label{subfig:TargetEvol}]{%
    \includegraphics[width=0.19\textwidth]{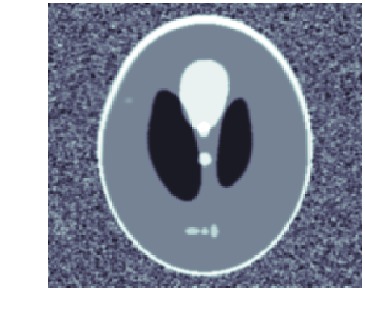}
    \includegraphics[width=0.19\textwidth]{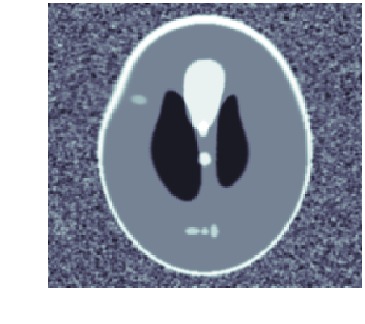}
    \includegraphics[width=0.19\textwidth]{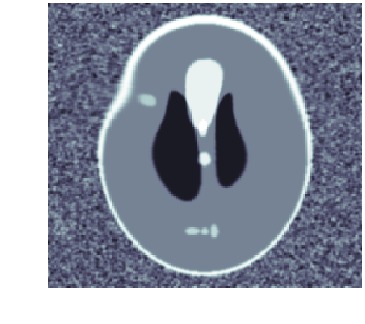}
    \includegraphics[width=0.19\textwidth]{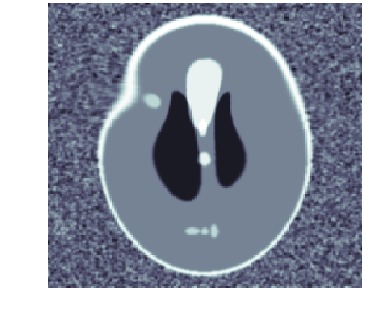}
    \includegraphics[width=0.19\textwidth]{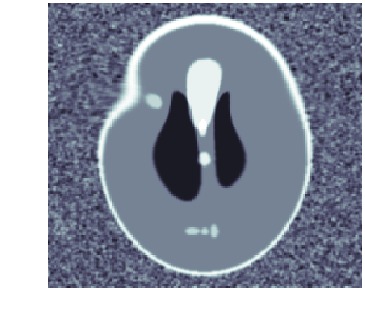}
    }
  \\        
  \subfloat[The (gated) tomographic data. Each data set is highly incomplete (limited angle).\label{subfig:DataEvol}]{%
    \includegraphics[width=0.19\textwidth]{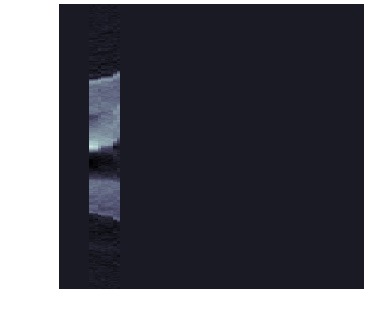}
    \includegraphics[width=0.19\textwidth]{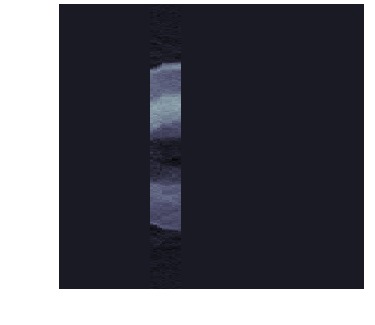}
    \includegraphics[width=0.19\textwidth]{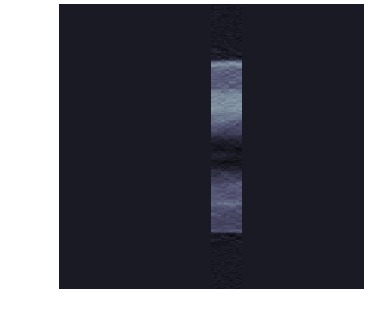}
    \includegraphics[width=0.19\textwidth]{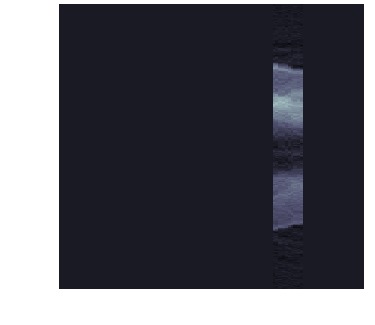}
    \includegraphics[width=0.19\textwidth]{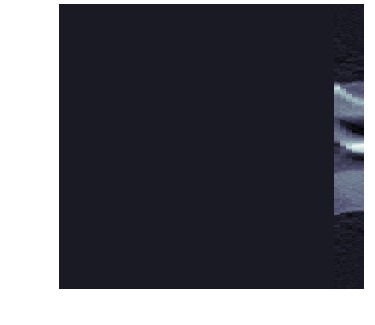}
  }  
  \\    
  \subfloat[Image trajectory (reconstruction), combines deformation and template trajectories.\label{subfig:ImageEvol}]{%
    \includegraphics[width=0.19\textwidth]{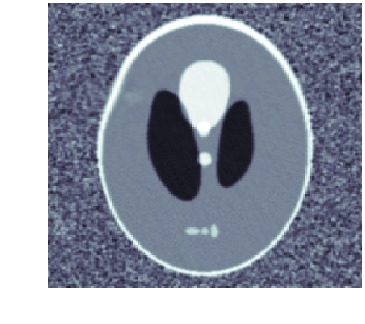}
    \includegraphics[width=0.19\textwidth]{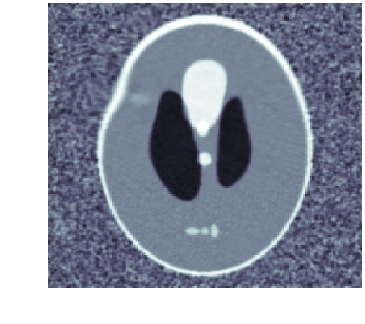}
    \includegraphics[width=0.19\textwidth]{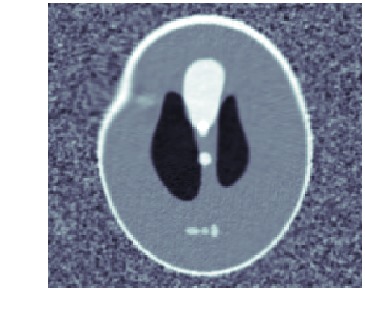}
    \includegraphics[width=0.19\textwidth]{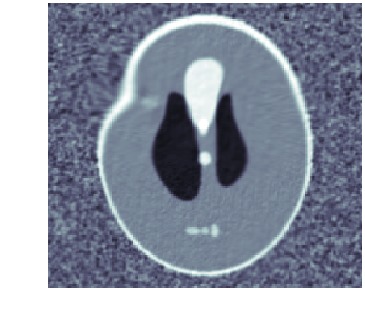}
    \includegraphics[width=0.19\textwidth]{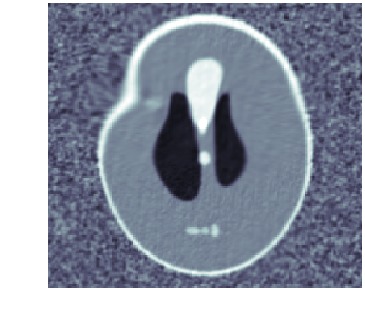}
  }
  \\
  \subfloat[Deformation trajectory, models mainly geometric changes.\label{subfig:DeforEvol}]{%
    \includegraphics[width=0.19\textwidth]{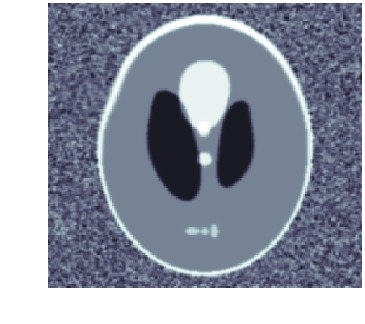}
    \includegraphics[width=0.19\textwidth]{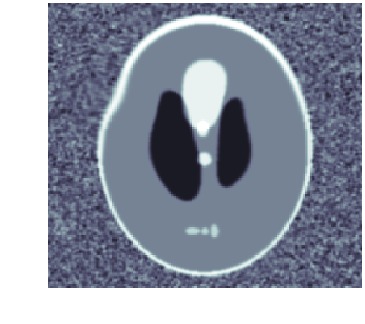}
    \includegraphics[width=0.19\textwidth]{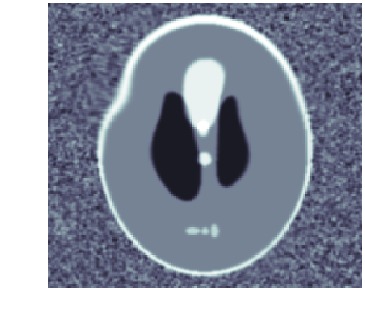}
    \includegraphics[width=0.19\textwidth]{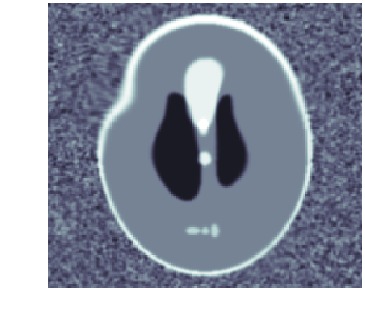}
    \includegraphics[width=0.19\textwidth]{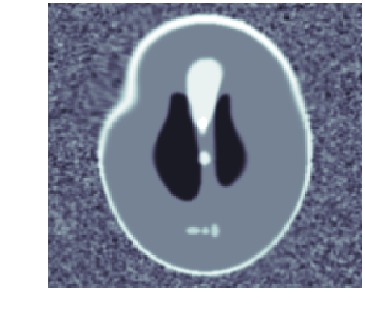}
  }    
  \\
  \subfloat[Template trajectory, models mainly intensity changes.\label{subfig:TemplateEvol}]{%
    \includegraphics[width=0.19\textwidth]{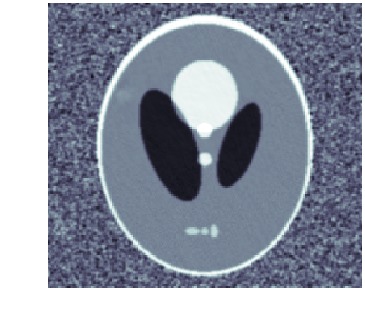}
    \includegraphics[width=0.19\textwidth]{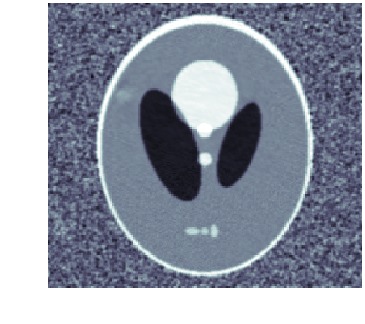}
    \includegraphics[width=0.19\textwidth]{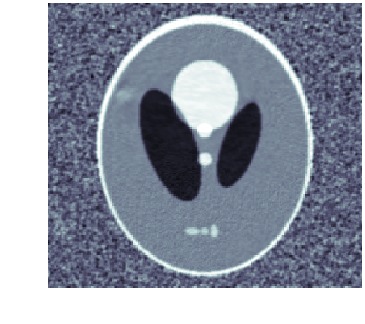}
    \includegraphics[width=0.19\textwidth]{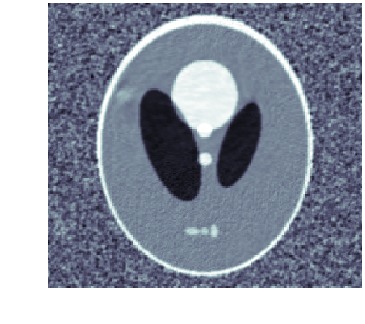}
    \includegraphics[width=0.19\textwidth]{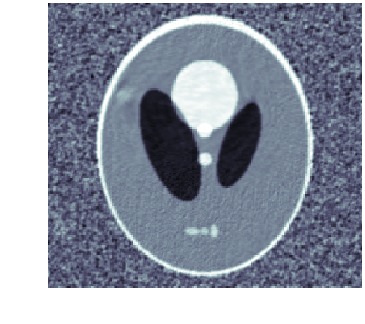}
  }      
  \\
  \subfloat[FBP (left) and TV (middle) reconstructions from concatenated data (right).\label{subfig:FBPTV}]{%
    \includegraphics[width=0.25\textwidth]{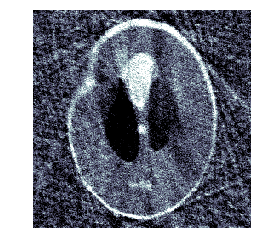}
    \includegraphics[width=0.25\textwidth]{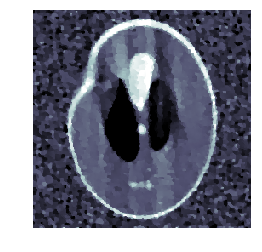}
    \includegraphics[width=0.27\textwidth]{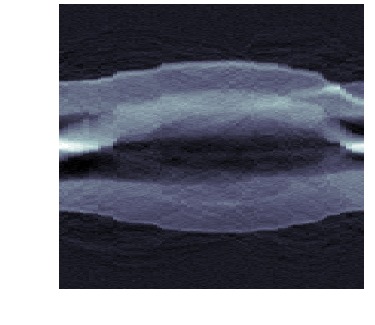}
  }    
\caption{Reconstructing the temporal evolution of a template using metamorphosis. Target \captionsubref{subfig:TargetEvol}, data \captionsubref{subfig:DataEvol}, and results \captionsubref{subfig:ImageEvol}--\captionsubref{subfig:TemplateEvol}, 
are shown at selected time points $t = 0.2, 0.4, 0.6, 0.8$, and $1.0$. As a comparison we show reconstructions assuming static target obtained from concatenating the gated tomographic data \captionsubref{subfig:FBPTV}.}
\label{Fig:TemporalExample}
\end{figure}

\section{Conclusions and discussion}

We introduced a metamorphosis-based framework for indirect registration and showed that this corresponds to a well-defined regularization method.
We also present several numerical examples from tomography. 

In particular, \cref{Sec:Test:SpatioTemporal} illustrates that this framework enables to recover the temporal evolution of a template from temporal data, even when data are very limited for each time point.
This approach assumes that one has access to an initial template.
In spatio-temporal reconstruction, such an initial template is unknown and it needs to be recovered as well. 
One approach for doing this is by an intertwined scheme that alternates between to steps (similarly to \cite{hinkle20124d}): 
\begin{inparaenum}[(i)]
\item given a template, estimate its evolution that is consistent with times series of data using the metamorphosis framework for indirect registration, and 
\item estimate the initial template from times series of data given its evolution.
\end{inparaenum}
The approach in \cref{Sec:Test:SpatioTemporal} solves the first step, which is the more difficult one. 

Another topic is the choice of hyperparameters. Our metamorphosis-based framework for indirect registration relies on three parameters, but as shown in \cref{Sec:Test:Robustness}, the most important one is the kernel-size $\sigma$. 
The latter has a strong influence on the way the reconstructed image trajectory decomposes into a  deformation and a template part. Clearly it acts as a regularisation parameter and a natural problem is to devise a scheme for choosing it depending on the size of features (scale) undergoing deformation. 
Unfortunately, similarly to direct registration using the \ac{LDDMM} framework, the choice of this parameter (and more generally choice of kernel for the \ac{RKHS} $V$) is still an open problem \cite{beg2005computing, Chen:2018aa, durrleman2014morphometry}. 
One way is to use a multi-scale approach \cite{bruveris2012mixture, risser2011simultaneous, sommer2011multi} but a general method for selecting an  appropriate kernel-size remains to be determined.

\section{Acknowledgements}
The work by Ozan \"Oktem, Barbara Gris and Chong Chen was supported by the Swedish Foundation for Strategic Research grant AM13-0049. 

\clearpage

\appendix

\section{Gradient computation}\label{Appendix:GradientComputation}
This section presents the computation of the gradient of $\ObjFunc_{\gamma, \tau}(\Cdot; \data)$, which is useful for any first order optimisation metod for minimising the functional $\ObjFunc_{\gamma, \tau}(\Cdot; \data)$ in \cref{Eq:ObjectiveFunctional}.
The computations assume  
\[  \template \in \RecSpace \cap C^1(\domain, \Real) 
    \quad\text{and}\quad
    (\vfield, \intenfield) \in \VelocitySpace{2}{V \times \RecSpace}
    \quad\text{with $\RecSpace = L^2(\domain,\Real)$.}
\]
Furthermore, for each $t \in [0, 1]$ we also assume $t \mapsto \intenfield(t,\Cdot) \in C^1(\domain, \Real)$.
In numerical implementations, we consider digitized images and considerations of the above type are not that restrictive. 

Let us first compute the differential of the data discrepancy term with respect to $\intenfield$ using the notation $\metatraj{\vfield,\intenfield}{t} := 
\DeforOp( \diffeoflow{\vfield}{t,0} , \templateflow{\vfield,\intenfield}{t}) =
\templateflow{\vfield,\intenfield}{t} \circ \diffeoflow{\vfield}{t,0}$. 
As noted in \cref{Eq:TemplateEvolutionSolution}, we have  
\begin{equation}\label{Eq:TemplateEvolutionSolution2}
 \metatraj{\vfield,\intenfield}{t}(x)= (\templateflow{\vfield,\intenfield}{t} \circ \diffeoflow{\vfield}{t,0})(x) 
      = \template\bigl( \diffeoflow{\vfield}{t,0}(x) \bigr) + \int_0^t \intenfield\bigl( \tau, \diffeoflow{\vfield}{t,\tau}(x) \bigr) \der \tau. 
\end{equation}
Then 
\begin{multline*}
\partial_{\intenfield}\Bigl[ \DataDiscr\bigl( \metatraj{\vfield,\intenfield}{t}, \data \bigr) \Bigr](\intenfield) (\eta) 
  = \Bigl\langle 
        \nabla\DataDiscr(\metatraj{\vfield,\intenfield}{t}, \data), 
        \partial_{\intenfield}\metatraj{\vfield,\intenfield}{t}(\intenfield)(\eta) 
      \Bigr\rangle
\\
 = \int_\domain \int_0^t  
     \nabla \DataDiscr\bigl(\metatraj{\vfield,\intenfield}{1}, \data \bigr) \eta(\tau, \diffeoflow{\vfield}{t,\tau}(x)) \der \tau \der x 
\\
 = \int_\domain  \int_0^1 1_{\tau \leq t} \vert \Det(\der \diffeoflow{\vfield}{\tau,t} (x)) \vert  
      \nabla  \DataDiscr(\metatraj{\vfield,\intenfield}{t},\data) (\diffeoflow{\vfield}{\tau,t} (x)) \eta(\tau, x) \der \tau  \der x  
\\      
 = \Bigl\langle 1_{\cdot \leq t} \vert \Det(\der \diffeoflow{\vfield}{\cdot, t} ) \vert  \nabla  \DataDiscr(\metatraj{\vfield,\intenfield}{t},\data))(\diffeoflow{\vfield}{\cdot, t} ), \eta \Bigr\rangle_{\VelocitySpace{2}{L^2(\domain, \Real)}}.
\end{multline*}

In order to compute the differential of the discrepancy term with respect to $\vfield$, we start by computing the differential of 
$\metatraj{\vfield,\intenfield}{1}$ with respect to $\vfield$.
Hence, let $\vfieldother \in \VelocitySpace{2}{V}$ and $x \in \domain$. Then 
\begin{multline}\label{eq:ImageTrajDerivative}
\frac{\der}{\der \epsilon} \metatraj{\vfield + \epsilon \vfieldother,\intenfield}{t}(x) \Bigl\vert_{\epsilon = 0} 
  =  \Bigl\langle 
         \nabla \template\bigl(\diffeoflow{\vfield}{t,0}(x) \bigr), 
         \frac{\der}{\der \epsilon} \diffeoflow{\vfield + \epsilon \vfieldother}{t,0}(x) \bigl\vert_{\epsilon=0} 
       \Bigr\rangle 
\\[0.5em]
\shoveleft{\qquad\qquad 
   + \int_0^t \Bigl\langle 
         \nabla \intenfield(\tau, \diffeoflow{\vfield}{t,\tau}(x)),%
          \frac{\der}{\der \epsilon}{\diffeoflow{\vfield + \epsilon \vfieldother}{t,\tau}(x)}\bigl\vert_{\epsilon=0}  
       \Bigr\rangle \der \tau 
}\\        
\shoveleft{\qquad  
   = - \int_0^t  \Bigl\langle 
          \nabla \template (\diffeoflow{\vfield}{t,0} (x)),\der \diffeoflow{\vfield}{s,0}{\bigl(\diffeoflow{\vfield}{t,s}(x)\bigr)}%
          \Bigl(\vfieldother(s, \diffeoflow{\vfield}{t,s}(x)\bigr)\Bigr) 
        \Bigr\rangle \der s 
}\\        
\shoveleft{\qquad\qquad
    \int_0^t \biggl\langle 
         \nabla \intenfield\bigl(\tau, \diffeoflow{\vfield}{t,\tau}(x) \bigr),  
          \int_ t^\tau \der{\diffeoflow{\vfield}{ s,\tau} }{\bigl(\diffeoflow{\vfield}{t,s}(x)\bigr)}\Bigl(\vfieldother\bigl(s, \diffeoflow{\vfield}{t,s}(x)\bigr)\Bigr) 
          \der s \biggr\rangle  \der \tau 
}\\        
\shoveleft{\qquad  
   = - \int_0^t  \biggl\langle 
          \nabla \template\bigl(\diffeoflow{\vfield}{t,0} (x)\bigr),
          \der{\diffeoflow{\vfield}{s,0}}{\bigl(\diffeoflow{\vfield}{t,s}(x)\bigr)} 
          \Bigl((\vfieldother\bigl((s, \diffeoflow{\vfield}{t,s}(x)\bigr)\Bigr) 
        \biggr\rangle \der s 
}\\        
\shoveleft{\qquad\qquad
 - \int_0^t \int_0^s \biggl\langle  
        \nabla \intenfield(\tau, \Cdot) \circ \diffeoflow{\vfield}{t,\tau}(x), 
         \der{\diffeoflow{\vfield}{s,\tau}}{\bigl(\diffeoflow{\vfield}{t,s}(x)\bigr)} 
         \Bigl(\vfieldother\bigl(s,\diffeoflow{\vfield}{t,s}(x)\bigr)\Bigr) 
       \biggr\rangle \der \tau \der s.
}         
\end{multline} 
Using \cref{eq:ImageTrajDerivative}, we can compute the derivative of $\epsilon \mapsto \DataDiscr\bigl(\DeforOp(\diffeoflow{\vfield+\epsilon \vfieldother}{t}, \templateflow{\vfield+\epsilon \vfieldother,\intenfield}{t})\bigr)$ at $\epsilon=0$:
\begin{multline*}
\frac{\der}{\der \epsilon} \DataDiscr\bigl( \DeforOp\bigl(\diffeoflow{\vfield+ \epsilon \vfieldother}{t}, \templateflow{\vfield+ \epsilon \vfieldother,\intenfield}{t}\bigr) \bigr)\bigl\vert_{\epsilon=0} 
  = \Bigl\langle 
       \nabla\DataDiscr\bigl( \metatraj{\vfield,\intenfield}{t}, \data \bigr), 
  \frac{\der}{\der \epsilon} \metatraj{\vfield+\epsilon \vfieldother,\intenfield}{t}\bigl\vert_{\epsilon=0}\Bigr\rangle  
\\
\shoveleft{\qquad  
= - \int_\domain \Bigg\{ 
     \int_0^t  \nabla\DataDiscr\bigl( \metatraj{\vfield,\intenfield}{t}, \data \bigr)(x) \cdot 
       \Bigg[ \Bigl\langle 
        \nabla\template \bigl(\diffeoflow{\vfield}{t,0}(x)\bigr),
        \der{\diffeoflow{\vfield}{s,0}}{\bigl(\diffeoflow{\vfield}{t,s}(x)\bigr)} (\vfieldother(s, \diffeoflow{\vfield}{t,s}(x))) 
     \Bigr\rangle
}\\        
\shoveleft{\qquad\qquad
     +  \int_0^s \biggl\langle  
           \nabla\zeta (\tau, \cdot) \circ \diffeoflow{\vfield}{t,\tau}(x), 
           \der{\diffeoflow{\vfield}{s,\tau}}{\bigl(\diffeoflow{\vfield}{t,s}(x)\bigr)}
           \Bigl(\vfieldother\bigl(s,\diffeoflow{\vfield}{t,s}(x)\bigr)\Bigr) 
         \biggr\rangle \der \tau 
     \Biggr] \der s 
\Biggr\} \der x 
}\\
\shoveleft{\qquad  
= - \int_\domain \Bigg\{ 
       \int_0^t \Bigl\vert \Det\bigl(\der{\diffeoflow{\vfield}{s,t}}(x)\bigr) \Bigr\vert 
            \nabla\DataDiscr\bigl( \metatraj{\vfield,\intenfield}{t}, \data \bigr)\bigl(\diffeoflow{\vfield}{s,t}(x)\bigr) \cdot 
}\\        
\shoveleft{\qquad\qquad\qquad\qquad
       \biggl[ \Bigl\langle \nabla\template\bigl(\diffeoflow{\vfield}{s,0}(x)\bigr),
          \der{\diffeoflow{\vfield}{s,0}}(x)\bigl(\vfieldother(s,x)\bigr) 
         \Bigr\rangle  
}\\        
\shoveleft{\qquad\qquad\qquad\qquad\qquad
         + \int_0^s \Bigl\langle  
              \nabla\zeta (\tau, \cdot) \circ \diffeoflow{\vfield}{s,\tau}(x), 
              \der{\diffeoflow{\vfield}{s,\tau}}(x) \bigl(\vfieldother(s,x)\bigr) 
            \Bigr\rangle  \der \tau 
  \biggr] \der s \Biggr\} \der x 
}\\
\shoveleft{\qquad  
   =  -  \int_\domain \int_0^t \Bigl\vert \Det\bigl(\der{\diffeoflow{\vfield}{s,t}}(x)\bigr) \Bigr\vert 
              \nabla\DataDiscr\bigl( \metatraj{\vfield,\intenfield}{t}, \data \bigr)\bigl(\diffeoflow{\vfield}{s,t}(x)\bigr) 
  \cdot \bigg[
    \bigl\langle 
      \nabla(\template \circ \diffeoflow{\vfield}{s,0} )(x), 
      \vfieldother(s,x) 
    \bigr\rangle
}\\        
\shoveleft{\qquad\qquad
    + \int_0^s 
         \Bigl\langle \nabla(\zeta( \tau, \cdot ) \circ \diffeoflow{\vfield}{s,\tau})(x), \vfieldother(s,x) \Bigr\rangle  \der \tau \Biggr] \der s  \der x 
}\\
\shoveleft{\qquad  
   = - \int_\domain \int_0^1 
   \biggl\langle 
         1_{s \leq t} \Bigl\vert \Det\bigl(\der{\diffeoflow{\vfield}{s,t}}(x)\bigr) \Bigr\vert \nabla\DataDiscr\bigl( \metatraj{\vfield,\intenfield}{t}, \data \bigr)\bigl(\diffeoflow{\vfield}{s,t}(x)\bigr)  
}\\        
\shoveleft{\qquad\qquad
         \biggl[ \nabla(\template \circ \diffeoflow{\vfield}{s,0} )(x) + 
         \int_0^s   \nabla(\zeta( \tau, \cdot ) \circ \diffeoflow{\vfield}{s,\tau})(x)  \der \tau \biggr], 
         \vfieldother(s,x) 
   \biggr\rangle \der s \der x 
}\\
\shoveleft{\qquad  
= - \biggl\langle  
    1_{ \cdot \leq t} \Bigl\vert \Det(\der{\diffeoflow{\vfield}{\cdot,t}}) \Bigr\vert 
    \nabla\DataDiscr\bigl( \metatraj{\vfield,\intenfield}{t}, \data \bigr) \circ \diffeoflow{\vfield}{\cdot,t}   
}\\        
\shoveleft{\qquad\qquad\qquad
     \biggl[ \nabla(\template \circ \diffeoflow{\vfield}{\cdot,0})(\cdot)  + 
     \int_0^{\cdot} \nabla(\zeta( \tau, \cdot ) \circ \diffeoflow{\vfield}{\cdot, \tau})(\cdot) \der \tau \biggr], 
   \vfieldother 
\biggr\rangle_{L^2([0,1],L^2(\domain,\Real^d))} 
}\\
\shoveleft{\qquad  
= - \biggl\langle \int_\domain K(x, \cdot ) 
   1_{ \cdot  \leq t} \Bigl\vert \Det(\der{ \diffeoflow{\vfield}{\cdot,t}}(x)) \Bigr\vert   
   \nabla\DataDiscr\bigl( \metatraj{\vfield,\intenfield}{t}, \data \bigr)\bigl(\diffeoflow{\vfield}{\cdot,t}(x)\bigr) 
}\\        
\shoveleft{\qquad\qquad\qquad
 \biggl[ \nabla(\template \circ \diffeoflow{\vfield}{\cdot,0}) (x)  + 
 \int_0^{\cdot}   \nabla(\zeta( \tau, \cdot ) \circ \diffeoflow{\vfield}{\cdot, \tau} ) (x) \der \tau  \biggr] ,\vfieldother 
\biggr\rangle_{L^2([0,1],V)}.
} 
\end{multline*}

\bibliographystyle{abbrv}
\bibliography{biblio}

\end{document}